\newtheorem{thm}{Theorem}
\newtheorem{lemma}[thm]{Lemma}
\theoremstyle{definition}
\theoremstyle{remark}
\numberwithin{thm}{section}
\DeclareMathAlphabet{\mathsfsl}{OT1}{cmss}{m}{sl}
\renewcommand{\phi}{\varphi}
\newcommand{\argmin}{\operatorname*{arg\; min}}
\newcommand{\Expect}{\operatorname{\mathbb{E}}}
\newcommand{\diag}{\operatorname{diag}}
\newcommand{\bl}{\boldsymbol{l}}
\newcommand{\bZ}{\boldsymbol{Z}}
\newcommand{\bX}{\boldsymbol{X}}
\newcommand{\bE}{\boldsymbol{E}}
\newcommand{\bF}{\boldsymbol{F}}
\newcommand{\bO}{\boldsymbol{O}}
\newcommand{\bP}{\boldsymbol{P}}
\newcommand{\bSigma}{\boldsymbol\Sigma}
\newcommand{\bU}{\boldsymbol{U}}
\newcommand{\bV}{\boldsymbol{V}}
\def\reals{\mathbb{R}}
\def\b0{\mathbf{0}}
\def\bP{\boldsymbol{P}}
\def\bSigma{\boldsymbol\Sigma}
\def\bU{\boldsymbol{U}}
\def\bW{\boldsymbol{W}}
\def\bT{\boldsymbol{T}}
\def\bC{\boldsymbol{C}}
\def\bA{\boldsymbol{A}}
\def\bY{\boldsymbol{Y}}
\def\bB{\boldsymbol{B}}
\def\bI{\mathbf{I}}
\date{\today}
\begin{document}

\title{Anisotropic twicing for single particle reconstruction using 
autocorrelation analysis}
\maketitle

\begin{abstract}
The missing phase problem in X-ray crystallography is commonly solved using the
technique of molecular replacement \cite{rossmann62, rossmann01, scapin13}, 
which borrows phases from a previously
solved homologous structure, and appends them to the measured Fourier magnitudes
of the diffraction patterns of the unknown structure. More recently, molecular 
replacement has been proposed for solving the missing orthogonal matrices 
problem arising in Kam's autocorrelation analysis \cite{kam1977, kam1980} for 
single particle reconstruction using X-ray free electron lasers 
\cite{Saldin2009, Hosseinizadeh2015, Starodub12ncom} and cryo-EM \cite{Bhamre2014}. In classical
molecular replacement, it is common to estimate the magnitudes of the unknown 
structure as twice the measured magnitudes minus the magnitudes of
the homologous structure, a procedure known as 
`twicing' \cite{tukey77}. Mathematically, this is equivalent to finding an 
unbiased estimator for
a complex-valued scalar \cite{Main1979}. We generalize this scheme for the
case of estimating real or complex valued matrices arising in single particle 
autocorrelation analysis. We name this approach ``Anisotropic Twicing'' 
because unlike the scalar case, the unbiased estimator is not obtained by a 
simple magnitude isotropic correction. We compare the performance of the least 
squares, twicing and anisotropic twicing estimators on synthetic and 
experimental datasets. We demonstrate 3D homology modeling in cryo-EM directly from experimental data without iterative refinement or class averaging, for the first time.
\end{abstract}

\section{Introduction}
The missing phase problem in crystallography entails recovering information
about a crystal structure that is lost during the process of imaging. In X-ray
crystallography, the measured diffraction patterns provide information about the
modulus of the 3D Fourier transform of the crystal. The phases of the Fourier
coefficients need to be recovered by other means, in order to reconstruct the 3D
electron density map of the crystal. 
A popular method to solve the missing phase problem
is Molecular Replacement (MR) \cite{rossmann62, rossmann01, scapin13}, which 
relies on a previously
solved homologous structure which is similar to the unknown structure. The 
unknown
structure is then estimated using the Fourier magnitudes of its diffraction
data, along with phases from the homologous
structure. 

The missing phase problem can be formulated mathematically using matrix notation that enables generalization as follows. Each 
Fourier coefficient $\bA$ is a complex-valued scalar, i.e., 
$\bA\in\mathbb{C}^{1\times 1}$ that we wish to estimate, given measurements of
$\bC=\bA\bA^*$ ($\bA^*$ denotes the complex conjugate transpose of $\bA$, i.e., 
$\bA^*_{ij} = \overline{A_{ji}})$, corresponding to the Fourier squared magnitudes, and 
$\bB$ corresponds to a
previously solved homologous structure such that $\bA=\bB+\bE$, where $\bE$ is 
a small perturbation. We denote an estimator of $\bA$ as $\hat\bA$. There
are many possible choices for such an estimator. One such choice is the solution
to the least squares problem
\begin{equation}\label{eq:leastsquareintro}
\hat{\bA}_{\text{LS}}=\argmin_{\bA}\|\bA-\bB\|_F,\,\,\,\text{subject to 
}\bA\bA^*=\bC
\end{equation}
where $||.||_F$ denotes the Frobenius norm.
However, it has been noticed that $\hat{\bA}_{\text{LS}}$ does not reveal the 
correct relative magnitude of the unknown part of the crystal structure, and 
the recovered magnitude is about half of the actual value. As a magnitude 
correction scheme, it was empirically found that setting the magnitude to be 
twice the experimentally measured magnitude minus the magnitude of the 
homologous structure has the desired effect of approximately resolving the 
issue.  That is, the estimator $2\hat{\bA}_{\text{LS}}-\bB$ is used instead. The theoretical 
advantage of this unbiased estimator for the case when 
$\bA\in\mathbb{C}^{1\times 1}$ has been justified in
~\cite{Main1979}. Following \cite{tukey77}, we refer to this procedure as 
twicing.

The advantage of using twicing is demonstrated in the following illustrative 
toy experiment \cite{CowtanCat} for the 2D case. We start with an image of a 
cat with a tail, which is the unknown image that we want to recover. We are 
given the Fourier magnitudes of the unknown image, measured in an experiment. 
In analogy with a known homologous structure used in MR, we have access to a 
similar image, that of a cat, but with its tail missing. We show the results of 
retrieving the original image using least squares, with and without employing 
twicing for magnitude correction, and note that twicing restores the tail 
better than least squares (see Fig~\ref{fig:tail_cat}).

\begin{figure}[]

\centering
\begin{tabular}{cc}
\includegraphics[width=0.3\linewidth]{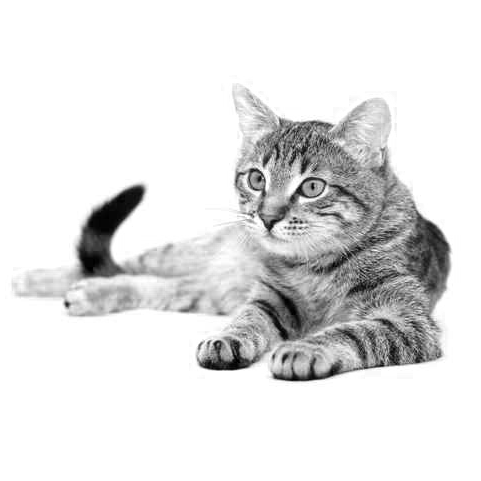}
&\includegraphics[width=0.3\linewidth]{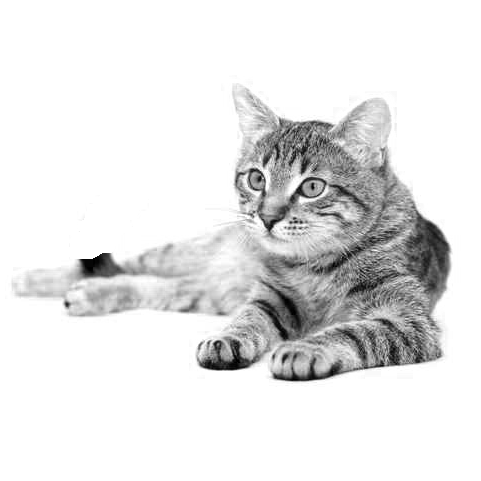}\\
(a) & (b)\\
\includegraphics[width=0.3\linewidth]{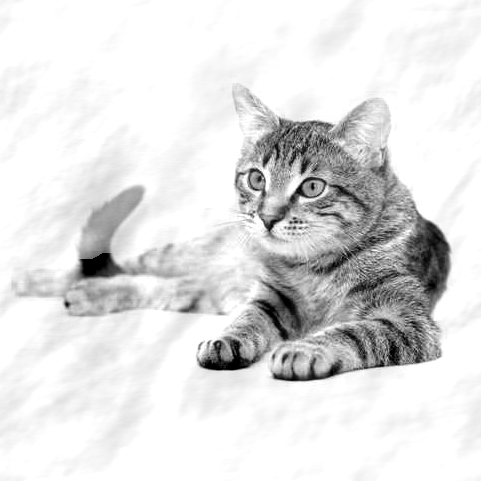}
&\includegraphics[width=0.3\linewidth]{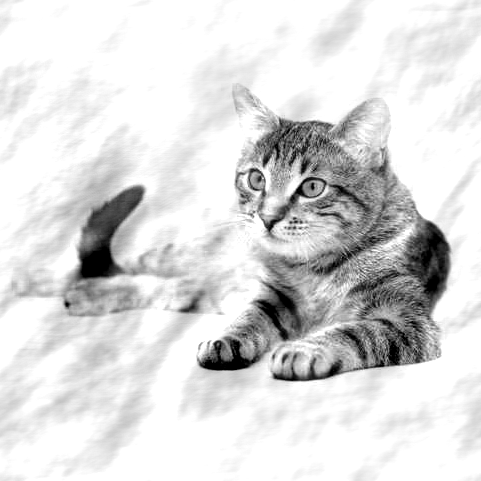}\\
(c) & (d)\\
\end{tabular}
\caption{ Demonstrating twicing in MR through a toy example \cite{CowtanCat}: 
given an unknown
image whose Fourier magnitudes are known through measurements, but phases are
missing, and a known similar image for which
both the Fourier magnitudes and phases are completely known.  (a) Original
image: unknown phases, known magnitudes (b) Similar image: known phases and
magnitudes (note that the tail is missing)
(c) Least squares estimator of original image, no magnitude correction (d)
Twicing for magnitude
correction. Note that the tail is better restored when twicing is used.}
\label{fig:tail_cat}
\end{figure}

As a natural generalization, one might wonder whether the estimator
$2\hat{\bA}_{\text{LS}}-\bB$ performs well for the non-scalar case $\bA\in\mathbb{R}^{N\times D}$ (or
$\mathbb{C}^{N\times D}$), where $(N,D)\neq (1,1)$.  In this paper, we consider 
the following problem: How to estimate
$\bA \in\mathbb{R}^{N\times D}$ (or $\mathbb{C}^{N\times D}$) from $\bC$ and
$\bB$, where $\bC=\bA \bA^*$ and $\bA=\bB+\bE$ for a matrix $\bE$ of small
magnitude? When  $N=D$, the result derived in this paper (see Sec. 
\ref{sec:estimator}) for an ``asymptotically consistent'' estimator of $\bA$ is
given by $\hat{\bA}_{\text{AT}} = \bB + \bU \bW {\bU}^{\star} 
(\hat{\bA}_{\text{LS}}-\bB)$
where $\bU$ and $\bW$ are defined in Sec. \ref{sec:estimator}. In particular, 
when $\bA \in\mathbb{C}^{1\times 1}$, this result coincides
with the result in ~\cite{Main1979} and justifies the approach of twicing. A 
formal proof
of the result derived in this paper is provided later in Sec. \ref{sec:proof}.

The motivation to study this problem is 3D structure determination 
in single particle reconstruction (SPR) without estimating the viewing angle associated with each image.
Although we focus on cryo-electron microscopy (cryo-EM) here, the methods in this paper can also be applied 
to SPR using X-ray free electron lasers (XFEL). In SPR using XFEL, short but 
intense pulses of X-rays are scattered from the molecule. The measured 2D 
diffraction patterns in random orientations are used to reconstruct the 3D 
diffraction volume by an iterative refinement procedure, akin to the approach in cryo-EM. 
Recently, there have been attempts to use Kam's theory for SPR using XFEL, to 
determine the 3D diffraction volume without any iterative refinement \cite{Saldin2009, Hosseinizadeh2015, Starodub12ncom}.

In this paper, we revisit Orthogonal Extension (OE) in
cryo-EM~\cite{Bhamre2014} that combines ideas from MR and 
Kam's autocorrelation analysis ~\cite{kam1980, Kam1985} for the purpose of 3D homology modeling, that is, for reconstruction of an unknown complex directly from its raw, noisy images when a previously solved similar complex exists. In SPR using cryo-EM \cite{resolution_revolution, Revol, 
Henderson}, the 3D structure of a macromolecule is reconstructed from its 
noisy, contrast transfer function (CTF) affected
2D projection images. Individual particle images are picked from micrographs, preprocessed, and used in further parts of the 
cryo-EM pipeline to obtain the 3D density map of the macromolecule. 
There exist many algorithms in popular cryo-EM software such as RELION, XMIPP, 
SPIDER, EMAN2, FREALIGN \cite{relion, xmipp, spider, eman2, frealign} that, 
given a starting 3D structure, refine it using the noisy 2D 
projection images. The result of the refinement procedure is often dependent on 
the choice of the initial model. It is therefore important to have a procedure 
to provide a good starting model for refinement. Also, a 
high quality starting model may significantly reduce the computational time 
associated with the refinement procedure (although we note recent advances in 
fast refinement \cite{Barnett2016,Punjani2017}). Such a high quality starting model can be obtained using OE. The main computational component of autocorrelation analysis is estimation of the covariance matrix of the 2D images. This computation requires only a single pass over the experimental images \cite{Zhao2016,Bhamre2016}. Autocorrelation analysis is therefore much faster than iterative refinement, which typically takes many iterations to converge. In fact,  the computational cost of autocorrelation analysis is even lower than that of a single refinement iteration, as the latter involves comparison of image pairs (noisy raw images with volume projections). OE can also be used for the purpose of model validation, being a complementary method for structure prediction. 

There are a few existing methods for ab-initio modeling. The random conical tilt method \cite{Radermacher2} can be used when 
two electron micrographs, one tilted and one untilted, are acquired with the 
same field of view. There are two main approaches for ab-initio estimation that 
do not involve tilting. One approach is to use the method of moments, that 
leverages the second order moments of the unknown 3D volume to estimate the 
particle orientations, but it suffers from being very sensitive to errors in 
the data \cite{Salzman1990, Goncharov1988}. The other approach is based on 
using common-lines between images \cite{VanHeel, Vainshtein, Singer2009, 
Singer2011}. However, common-lines based approaches have not been successful in 
obtaining 3D ab-initio models directly from raw, noisy images without performing 
any class averaging to suppress the noise. 

OE predicts the structure directly from the raw, noisy images without any averaging. The method is analogous to MR in X-ray crystallography for solving the missing phase problem. In OE, the homologous structure is used for estimating the missing orthogonal matrices associated with the spherical harmonics expansion of the 3D structure in reciprocal space. It is 
important to note that the missing orthogonal matrices in OE are not 
associated with the unknown pose of the particles, but with the
spherical harmonics expansion coefficients. The
missing coefficient matrices are, in general, rectangular of size $N \times D$, 
which serves as the
motivation to extend twicing to the general case of finding an estimator when
$(N,D) \neq (1,1)$. 

The paper is
organized as follows: First, we briefly review Kam's theory for autocorrelation 
analysis and describe the problem of OE in
cryo-EM in Sec. \ref{sec:oe}. Next, in Sec. \ref{sec:ls_est}, we describe 
the least squares solution to find an estimator to an unknown structure 
when we have
noisy projection images of the unknown structure,
and additional information about a homologous structure. 
In Sec. 4, we introduce Anisotropic Twicing as well as a family of estimators that interpolate between the least squares estimator and Anisotropic Twicing. We detail the procedure to estimate autocorrelation matrices and the algorithm of Orthogonal Extension with the Anisotropic Twicing estimator in Sec. 5. We benchmark the performance of these estimators through numerical experiments with synthetic 
and experimental datasets in Sec. 6. We provide a formal proof for asymptotic consistency 
of our Anisotropic Twicing correction scheme in
the general case of $(N,D)\neq (1,1)$ in the appendix (see Sec. 
\ref{sec:proof}). The code for all the algorithms in this paper is available in 
the open source software toolbox, ASPIRE, available for download at 
\url{spr.math.princeton.edu}. 

We apply anisotropic twicing to both synthetic and experimental cryo-EM 
datasets, and find that it recovers the unknown structure better than the least 
squares and twicing estimators on synthetic data. This is the first 
demonstration of reconstructing a starting 3D model in the presence of 
experimental conditions of CTF and noise without any class averaging, directly 
from raw images using the `Orthogonal Extension' procedure \cite{Bhamre2014}. 
While the anisotropic 
twicing estimator outperforms other estimators on synthetic datasets, in the case of 
the experimental dataset the reconstructions from all estimators are 
similar in quality, and any of these reconstructions can be used as a good starting point for refinement.

\section{Orthogonal Extension (OE) in Cryo-EM}
\label{sec:oe}
In ~\cite{Bhamre2014}, the authors presented two new approaches, collectively 
termed `Orthogonal Retrieval' methods, for 3D homology modeling based on Kam's theory ~\cite{kam1980}. Orthogonal Retrieval can be 
regarded as a generalization of the MR method from X-ray
crystallography to cryo-EM. 

Let $\Phi_A: \mathbb{R}^3 \rightarrow \mathbb{R}$ be the electron scattering 
density of the unknown structure, and let $\mathcal{F}(\Phi_A) : \mathbb{R}^3 
\to
\mathbb{C}$ be its 3D Fourier
transform. Consider the spherical harmonics expansion of $\mathcal{F}(\Phi_A)$
\begin{equation}
\label{eq:phia_expansion_full}
\mathcal{F}(\Phi_A)(k,\theta,\varphi) = \sum_{l=0}^{\infty} \sum_{m=-l}^{l} 
A_{lm}(k)
Y_l^m
(\theta, \varphi)
\end{equation}
where $k$ is the radial frequency and $Y_l^m$ are the real spherical
harmonics. Kam showed that the autocorrelation matrices 
\begin{equation}
\label{eq:Cl}
C_{{l}}(k_1,k_2) = \sum_{m=-l}^l A_{lm}(k_1)\overline{A_{lm}(k_2)}, \quad 
l=0,1,\ldots
\end{equation}
can be estimated from the covariance matrix of the 2D projection images whose 
viewing angles are uniformly distributed over the sphere.  This can be achieved with both clean as well as noisy images, as 
long as the number of noisy images is large enough to allow estimation of the 
underlying population covariance matrix of the clean images to the desired 
level of accuracy, using \cite{Bhamre2016}. The decomposition (3) suggests that the $l$'th order autocorrelation matrix $\bC_{\boldsymbol{l}}$ has a maximum rank of $2l+1$, and the maximum rank is even smaller in the presence of symmetry. 

While \eqref{eq:phia_expansion_full} is true if we 
want to represent the molecule to infinitely high resolution, in practice the 
images are sampled on a finite pixel grid and we cannot recover information beyond the Nyquist 
frequency. In addition, the molecule is compactly supported in $\mathbb{R}^3$, 
and the support size can also be estimated from the images. It is therefore 
natural to expand the volume in a truncated basis of spherical Bessel functions or 3D 
prolates. This leads to
\begin{equation}\label{eq:phia_expansion}
\mathcal{F}(\Phi_A)(k,\theta,\varphi) = \sum_{l=0}^{L} \sum_{m=-l}^{l} A_{lm}(k)
Y_l^m
(\theta, \varphi), \quad l=0, 1,\ldots, L 
\end{equation}
%\begin{equation}
%\label{Cl_practice}
%C_l(k_1,k_2) = \sum_{m=-l}^l A_{lm}(k_1)\overline{A_{lm}(k_2)}, \quad l=0,1,\ldots L 
%\end{equation}
where the truncation $L$ is based on the resolution limit that can be achieved by the reconstruction. Our specific choice of $L$ is described after~\eqref{eq:nyq_sampling}. We can expand $A_{lm}(k)$ in a truncated basis of radial functions, chosen here as the spherical Bessel functions, as 
follows:
\begin{equation}
\label{Alm_practice}
A_{lm}(k)= \sum_{s=1}^{S_l} a _{lms}j_{ls}(k).
\end{equation}
Here the normalized spherical Bessel functions are
\begin{equation}
\label{j_ls}
j_{ls}(k)= \frac{1}{c\sqrt{\pi}|j_{l+1}(R_{l,s})|} j_l(R_{l,s}\frac{k}{c}), \quad 0<k<c, 
\quad s=1,2,\ldots,S_l,
\end{equation}
where $c$ is the bandlimit of the images, and $R_{l,s}$ is the $s$'th positive root of the equation $j_l(x)=0$. The functions
$j_{ls}$ are normalized such that
\begin{equation}
\int_0^c j_{ls}(k) j_{ls}^*(k) k^2 dk =1
\end{equation}
The number of radial basis functions $S_l$ in~\eqref{Alm_practice} is determined using the Nyquist criterion, similar to \cite{klug1972, Zhao2016}, where it has been described for 2D images expanded in a Fourier-Bessel basis (rather than 3D volumes as done here). We assume that the 2D images, and hence the 3D volume, are compactly supported on a disk of radius 
$R$ and have a bandlimit $0<c\leq0.5$. We require that the maximum of the inverse Fourier transform of the spherical Bessel function and its first zero after this maximum are both inside the sphere of compact support radius $R$. The truncation limit 
$S_l$ in~\eqref{Alm_practice} is then defined by the sampling criterion as the largest integer
$s$ that satisfies \cite{cheng2013random}
\begin{equation}
\label{eq:nyq_sampling}
R_{l,(s+1)}\leq 2\pi cR.
\end{equation}
$L$ in~\eqref{eq:phia_expansion} is the largest integer $l$ for which~\eqref{eq:nyq_sampling} has only one solution, that is, $S_l$ in~\eqref{Alm_practice} is at least $1$.
Each $\bC_{\boldsymbol{l}}$ is a matrix of size $S_l \times S_l$ when using the representation~\eqref{Alm_practice} in~\eqref{eq:Cl}. $S_l$ is a monotonically decreasing function of $l$ with approximately linear decay that we compute numerically. In
matrix notation,~\eqref{eq:Cl} can be written as
\begin{equation}
\label{eq:Cl_matrix}
\bC_{\boldsymbol{l}}=\bA_{\boldsymbol{l}} \bA_{\boldsymbol{l}}^*, 
\end{equation}
where $\bA_{\boldsymbol{l}}$ is a matrix of size
$S_l \times (2l+1)$, with $A_l(s,m) = a_{lms}$ in~\eqref{Alm_practice}. From \eqref{eq:Cl_matrix}, we note that $\bA_{\boldsymbol{l}}$ can be 
obtained from the Cholesky 
decomposition of $\bC_{\bl}$ up to a unitary matrix $\bU_{\boldsymbol{l}} \in \text{U}(2l+1)$ (the group of 
unitary matrices of size $(2l+1) \times (2l+1)$). Since $\Phi_A$ is real-valued, one can show using properties of its Fourier transform together with properties of the real spherical harmonics, that $A_{lm}(k)$ (and hence $\bA_{\boldsymbol{l}}$) is real for even $l$ and purely imaginary for odd $l$. So $\bA_{\boldsymbol{l}}$ is unique up to an orthogonal matrix $\bO_{\boldsymbol{l}} \in \text{O}(2l+1)$ (the group of 
orthogonal matrices of size $(2l+1) \times (2l+1)$). Determining $\bO_{\boldsymbol{l}}$ is the orthogonal retrieval problem in \cite{Bhamre2014}. 

If $S_l > 2l+1$, estimating the missing orthogonal matrix 
$\bO_{\boldsymbol{l}}$ is equivalent to estimating $\bA_{\boldsymbol{l}}$. Since $S_l$ is a decreasing function of $l$, for some large enough $l$ we would have $S_l < 2l+1$. For example, 
for the largest $l=L$ where $S_L=1$, $\bA_{\boldsymbol{L}}$ is of size $1 \times (2L+1)$, that 
is, it has $O(L)$ degrees of freedom. In such cases it does not make 
sense to estimate $\bO_{\boldsymbol{L}}$ which has $O(L^2)$ degrees of freedom. But we 
can still estimate $\bA_{\boldsymbol{L}}$ closest to $\bB_{\boldsymbol{L}}$ using \eqref{eq:leastsquareintro}.

\section{The Least Squares Estimator}
\label{sec:ls_est}
In this section we review the least squares estimator that was proposed in \cite{Bhamre2014}.
In order to determine the 3D Fourier transform $\mathcal{F}(\Phi_A)$ and thereby the 3D density  ${\Phi_A}$, we need to determine the coefficient matrices $\bA_{\boldsymbol{l}}$ of the spherical harmonic expansion. In OE, the coefficient matrices $\bA_{\boldsymbol{l}}$ are estimated with 
the aid of a homologous structure $\Phi_B$. Suppose $\Phi_B$ is a known 
homologous
structure, whose 3D Fourier transform $\mathcal{F}(\Phi_B)$ has the following 
spherical harmonic expansion:
\begin{equation}
\mathcal{F}(\Phi_B)(k,\theta,\varphi) = \sum_{l=0}^{\infty} \sum_{m=-l}^{l}
B_{lm}(k) Y_l^m (\theta, \varphi)
\end{equation} 
In practice, the homologous structure $\Phi_B$ is available at some finite resolution, therefore only a finite number of coefficient matrices $\bB_{\bl}$ ($l=0,1,\ldots,L_B$) are given. We show how to estimate the unknown structure $\Phi_A$ up to the resolution dictated by the input images and the resolution of the homologous structure through estimating the  coefficient matrices $\bA_{\bl}$ for $l=0,1,\ldots,L_A$ where $L_A = \min({L,L_B})$.

Let $\bF_{\boldsymbol{l}}$ be any matrix of size $S_l\times 2l+1$ satisfying $\bC_{\boldsymbol{l}}= \bF_{\boldsymbol{l}} \bF_{\boldsymbol{l}}^*$, determined from the
Cholesky decomposition of $\bC_{\boldsymbol{l}}$. Then, using \eqref{eq:Cl_matrix}
\begin{equation}
\bA_{\boldsymbol{l}} = \bF_{\boldsymbol{l}} \bO_{\boldsymbol{l}}
\end{equation}
where $\bO_{\boldsymbol{l}} \in \text{O}(2l+1)$ (for $S_l > 2l+1$). Using the assumption that 
the structures are homologous, $\bA_{\boldsymbol{l}} \approx \bB_{\boldsymbol{l}}$, one can determine $\bO_{\boldsymbol{l}}$ as 
the solution to the
least squares problem
\begin{equation}\label{ls}
\bO_{\boldsymbol{l}}=\argmin_{\bO \in \text{O}(2l+1)}  \|\bF_{\boldsymbol{l}} \bO - \bB_{\boldsymbol{l}}\|_F^2,
\end{equation}
where $\|\cdot\|_F$ denotes the Frobenius norm. 
Although the orthogonal group is non-convex, there is a closed form solution to
(\ref{ls}) (see, e.g., ~\cite{Keller1975}) given by
\begin{equation}
\bO_{\boldsymbol{l}}= \bV_{\boldsymbol{l}} \bU_{\boldsymbol{l}}^T,
\end{equation}
where 
\begin{equation}
\bB_{\boldsymbol{l}}^* \bF_{\boldsymbol{l}} = \bU_{\boldsymbol{l}} \boldsymbol{\Sigma}_{\boldsymbol{l}} \bV_{\boldsymbol{l}}^T
\end{equation}
is the singular value decomposition (SVD) of $\bB_{\boldsymbol{l}}^* \bF_{\boldsymbol{l}}$. Thus, $\bA_{\boldsymbol{l}}$ can be 
estimated
by the following least squares estimator:
\begin{equation}
 \hat{\bA}_{\boldsymbol{l},\text{LS}} = \bF_{\boldsymbol{l}} \bV_{\boldsymbol{l}} \bU_{\boldsymbol{l}}^T.
\end{equation} 
Hereafter, we drop the subscript $l$ for convenience, since the procedure can be 
applied to each $l$ separately.

\subsection{Algorithm 1: Orthogonal Extension by Least Squares}
\begin{algorithm}
\caption{Orthogonal Extension}
\begin{algorithmic}[1]
\label{algo:OE}
\Procedure{Orthogonal Extension by Least Squares (OE-LS): Estimate $\bA$ given 
$\bB \approx \bA$, subject to $\bC = \bA \bA^*$}{}\\
\Require $\bB \in \mathbb{C}^{N \times D}$, $\bC \in \mathbb{C}^{N \times N}$
\State Cholesky decomposition of $\bC$ to find an  $\bF \in \mathbb{C}^{N \times D}$ such that $\bC=\bF \bF^*$
\State Calculate $\bB^* \bF$ and its singular value decomposition 
$\bB^*\bF=\bU_0\bSigma_0\bV_0^*$.
\State The estimator is $\hat{\bA}_{\text{LS}}=\bF\bV_0\bU_0^*.$
\EndProcedure
\end{algorithmic}
\end{algorithm}

\section{Unbiased Estimator: Anisotropic Twicing}
\label{sec:estimator}
The case that $\bA$ is a complex-valued scalar, i.e., $\bA\in\mathbb{C}^{1\times
1}$ has been studied in X-ray crystallography. The   theoretical advantage of 
the unbiased estimator $2\hat{\bA}_{\text{LS}}-\bB$ for this case was elucidated in 
\cite{Main1979}. As a natural generalization, one may wonder whether the 
estimator
$2\hat{\bA}_{\text{LS}}-\bB$ is also unbiased for $(N,D)\neq (1,1)$. We assume that $\bA$ 
is sampled from the model $\bA=\bF\bV$, where $\bF\bF^*=\bC$ and $\bV$ is a 
random orthogonal matrix (or a random unitary matrix) sampled from the uniform 
distribution with Haar measure over the orthogonal group when $\bA$ is a 
real-valued matrix or the unitary group (when $\bA$ is a complex-valued 
matrix). This probabilistic model is reasonable for \eqref{eq:leastsquareintro}, 
because when $\bA\bA^\star$ is given, $\bF$ is known and $\bV$ is an unknown 
orthogonal or unitary matrix, that is, we have no prior information about $\bV$. 
In addition, we assume that $\bB$ is a matrix close to $\bA$ such that $\bA-\bB$ 
is fixed. Our goal is to find an unbiased estimator of $\bA$ which is an affine 
transformation of $\hat{\bA}_{\text{LS}}$. The main result is as follows:

\begin{thm}\label{thm:main}
When  $N = D$, assuming that the spectral decomposition of $\bC$ is given by 
$\bC=\bU\diag(\lambda_1,\lambda_2,\cdots,\lambda_D)\bU^*$, then using our 
probabilistic model we have \begin{equation}\label{eq:expectation}
\Expect[\bA-\hat{\bA}_{\text{LS}}]=\bU\bT\bU^*(\bA-\bB) + o(\|\bA-\bB\|_F),
\end{equation}
where $\bT$ is a diagonal matrix with $i$-th diagonal entry given by 
\[\bT_{ii}=\begin{cases}\frac{1}{D}\Big[-\frac{1}{2}+\sum_{1\leq j\leq 
D}\frac{\lambda_i^2}{\lambda_i^2+\lambda_j^2}\Big]\,\,\,\text{when 
$\bA,\bC\in\mathbb{R}^{D\times D}$},\\\frac{1}{D}\sum_{1\leq j\leq 
D}\frac{\lambda_i^2}{\lambda_i^2+\lambda_j^2}
\,\,\,\text{when $\bA,\bC\in\mathbb{C}^{D\times D}$},
\end{cases}\] 
and $f(\bX)=o(\|\bX\|_F)$ means that $\limsup_{\|\bX\|_F\rightarrow
0}{f(\bX)/\|\bX\|_F}\rightarrow 0$.
\end{thm}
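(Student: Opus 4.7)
The plan is a first-order perturbation analysis of the orthogonal Procrustes problem about the noiseless configuration $\bA - \bB = \mathbf{0}$, followed by an average against the Haar distribution on $\bV$. When $\bA - \bB = \mathbf{0}$, the Procrustes minimiser satisfies $\hat{\bO} = \bV$ exactly and $\hat{\bA}_{\text{LS}} = \bA$. For a small perturbation I would parametrise $\hat{\bO} = \bV(\bI + \bS) + O(\|\bA - \bB\|_F^2)$, where $\bS$ is skew-Hermitian in the complex case and skew-symmetric in the real case, and is linear in $\bA - \bB$. Substituting into $\hat{\bA}_{\text{LS}} = \bF\hat{\bO}$ gives
\begin{equation*}
\bA - \hat{\bA}_{\text{LS}} = -\bA\bS + o(\|\bA-\bB\|_F),
\end{equation*}
which reduces the task to computing $\Expect[\bA\bS]$.

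To pin down $\bS$ I would invoke the first-order optimality condition for orthogonal Procrustes, namely that $\hat{\bO}^*\bF^*\bB$ be Hermitian (resp.\ symmetric in the real case). Substituting the ansatz for $\hat{\bO}$ and $\bB = \bA - (\bA - \bB)$ and linearising in $\bA - \bB$ yields the Sylvester-type equation
\begin{equation*}
\bS\,(\bA^{*}\bA) + (\bA^{*}\bA)\,\bS = (\bA-\bB)^{*}\bA - \bA^{*}(\bA-\bB).
\end{equation*}
Since $\bA^{*}\bA = \bV^{*}\bF^{*}\bF\bV$ is unitarily conjugate to $\bF^{*}\bF$, which shares its spectrum with $\bC = \bF\bF^{*}$, passing to the eigenbasis of $\bA^{*}\bA$ solves this equation entrywise, with denominators built from the eigenvalues of $\bC$ — precisely the denominator structure that appears in $\bT$.

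The final step is to substitute the explicit $\bS$ into $-\bA\bS$ and take expectation over $\bV$ using only second-order Haar moments: in the complex case $\Expect[V_{ik}V_{jp}] = 0$ and $\Expect[V_{ik}\overline{V_{jp}}] = \tfrac{1}{D}\delta_{ij}\delta_{kp}$, while in the real case $\Expect[V_{ik}V_{jp}] = \tfrac{1}{D}\delta_{ij}\delta_{kp}$. Only one contraction survives in each case, and after summation the result reassembles into $\bU\bT\bU^{*}(\bA-\bB)$ with $\bT$ diagonal in the eigenbasis of $\bC$, reproducing the complex formula directly. In the real case the constraint $S_{ii} = 0$ imposed by skew-symmetry removes the $j = i$ term from the sum, and combining this with the Haar moments produces the additional $-\tfrac{1}{2}$ correction appearing inside the brackets.

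The main obstacle I anticipate is the combinatorial bookkeeping in the Haar average: one has to organise the products of entries of $\bU$, $\bV$, and $\bA - \bB$ carefully, recognise that only one contraction is nonzero, and match the algebra to the compact form $\tfrac{1}{D}\sum_{j}\lambda_{i}^{2}/(\lambda_{i}^{2} + \lambda_{j}^{2})$. A secondary and more routine issue is controlling the $o(\|\bA - \bB\|_F)$ remainder; this follows from the fact that the Sylvester operator $\bX \mapsto \bX(\bA^{*}\bA) + (\bA^{*}\bA)\bX$ is invertible on the skew-Hermitian (resp.\ skew-symmetric) subspace whenever $\bA^{*}\bA$ is strictly positive definite, so the Procrustes map depends smoothly on $\bB$ in a neighbourhood of $\bA$ and the implicit function theorem yields a $C^{1}$ expansion of $\hat{\bA}_{\text{LS}}$ in the perturbation.
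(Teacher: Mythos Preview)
Your approach is correct and reaches the same first-order expansion as the paper, but by a genuinely different route. The paper works directly with the closed-form Procrustes solution: it sets $\bF=\bA$, writes $\hat{\bA}_{\text{LS}}=(\bA-\bE)^{*\,-1}[(\bA-\bE)^*\bA\bA^*(\bA-\bE)]^{1/2}$, and expands each factor to first order in $\bE=\bA-\bB$ using an elementary lemma on the Fr\'echet derivative of the matrix square root. You instead linearise the optimality condition ($\hat{\bO}^*\bF^*\bB$ Hermitian) on the tangent space of the orthogonal/unitary group, obtaining the Sylvester equation $\bS(\bA^*\bA)+(\bA^*\bA)\bS=\bE^*\bA-\bA^*\bE$. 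Both routes deliver the same entrywise expression $(\bSigma\tilde{\bS})_{ij}=\dfrac{\sigma_i\sigma_j\overline{\bE_{0,ji}}-\sigma_i^2\bE_{0,ij}}{\sigma_i^2+\sigma_j^2}$ (with $\bE_0=\bU^*\bE\bV$), after which the Haar average is identical. Your argument is arguably cleaner conceptually and makes the smoothness of $\hat{\bA}_{\text{LS}}$ in $\bB$ transparent via the implicit function theorem; the paper's calculation is more explicit and needs no appeal to smooth dependence.

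One small inaccuracy: your attribution of the $-\tfrac{1}{2}$ in the real case to ``$S_{ii}=0$ removes the $j=i$ term'' is not the mechanism. The diagonal of $\tilde{\bS}$ vanishes automatically from the Sylvester equation (the right-hand side is skew), in both the real and complex settings. The $-\tfrac{1}{2}$ actually arises because the term $\overline{\bE_{0,ji}}\sigma_j$ in the numerator contributes $\Expect[\overline{V}\,\overline{V}]$-type moments after multiplying by $\bV^*$; these vanish for Haar-unitary $\bV$ but equal $\tfrac{1}{D}\delta\delta$ for Haar-orthogonal $\bV$, and the surviving contraction forces $j=i$, producing $-\tfrac{1}{D}\cdot\tfrac{\sigma_i^2}{2\sigma_i^2}=-\tfrac{1}{2D}$. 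You already list the correct moment identities, so the computation will go through once you track this term carefully.
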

 
From \eqref{eq:expectation}, we have 
\[
(-\bI+\bU\bT\bU^*)(\bA-\bB)=\bB - \Expect[\hat{\bA}_{\text{LS}}] + o(\|\bA-\bB\|_F) 
\]
and an ``asymptotically consistent'' estimator of $\bA$ is given by
\begin{equation}\label{eq:solution}
\hat{\bA}_{\text{AT}}=\bB-(\bI-\bU\bT\bU^*)^{-1}(\bB-\hat{\bA}_{\text{LS}}) =\bB+\bU\bW\bU^*(\hat{\bA}_{\text{LS}}-\bB),
\end{equation}
 where $\bW=(\bI-\bT)^{-1}$. 
 
A formal proof
of Theorem~\ref{thm:main} is provided in the appendix (Sec.~\ref{sec:proof}). In particular, when $\bA,\bB\in\mathbb{C}^{1\times 1}$, the matrices reduce to 
scalars: $\bU=1$, $\bT=\frac{1}{2}$, $\bW=2$ and 
$\hat{\bA}_{\text{AT}}=\bB+2(\hat{\bA}_{\text{LS}}-\bB)$. This result coincides
with the result in~\cite{Main1979} and justifies the approach of ``twicing''.

\subsection{A family of estimators}
\label{sec:fam_estimators}
From~\eqref{eq:expectation} it follows that
 \begin{equation}\label{eq:temp}
\Expect[\bA]=\Expect[\hat{\bA}_{{\text{LS}}}]+\bU\bT\bU^*(\bA-\bB) + 
o(\|\bA-\bB\|_F).
\end{equation}
Following the spirit of Tukey's twicing, we could approximate $\bA$ in the RHS of 
\eqref{eq:temp} by $\hat{\bA}_{\text{LS}}$, which leads to a new estimator
\[
\hat{\bA}_{\text{T}}^{(1)}=\hat{\bA}_{\text{LS}}+\bU\bT\bU^*(\hat{\bA}_{\text{LS}}
-\bB).
\]
In fact, there exists a family of estimators by approximating $\bA$ recursively in the RHS 
of \eqref{eq:temp} by $\hat{\bA}_{\text{T}}^{(t-1)}$ (with $\hat{\bA}_{\text{T}}^{(0)} = \hat{\bA}_{\text{LS}}$):
\begin{equation}
\hat{\bA}_{\text{T}}^{(t)}=\hat{\bA}_{\text{LS}}+\bU\bT\bU^*(\hat{\bA}_{\text{T}}^{(t-1)}
-\bB).
\end{equation}
This family of estimators can be explicitly written as
\begin{equation}
\label{eqn:family_est}
\hat{\bA}_{\text{T}}^{(t)}=\bB+\bU(\bI+\bT+\bT^2+\cdots+\bT^t)\bU^*(\hat{\bA}_{\text{LS}}-\bB).
\end{equation}
Using $\bW=(\bI-\bT)^{-1}=\sum_{i=0}^\infty\bT^i$, we have that $\bA_{\text{T}}^{(t)}\rightarrow 
\hat{\bA}_{\text{AT}}$ as $t\rightarrow\infty$.

In general, this family of estimators has smaller variance than 
$\hat{\bA}_{\text{AT}}$, but larger bias since they are not unbiased (see Fig.~\ref{fig:biasvar}).

\subsection{Generalization to the setting $N\neq D$}

If $N > D$, then the column space of $\bA$ is the same as the column space of 
$\bC$. Let $\bP$ be the projector of size $N\times D$ to this column space, 
then we have $\bA=\bP\bP^*\bA$. As a result, to find an unbiased estimator of 
$\bA$, it is sufficient to find an unbiased estimator of $\bP^*\bA$, which is a 
square matrix. Since $\bP^*\bA$ is close to $\bP^*\bB$ and 
$(\bP^*\bA)(\bP^*\bA)^*=\bP^*\bC\bP^*$ is known, Theorem~\ref{thm:main} is 
applicable, and an unbiased estimator of $\bP^*\bA$ can be obtained through 
\eqref{eq:solution}, with $\bB$ replaced by $\bP^*\bB$ and $\bC$ replaced by 
$\bP^*\bC\bP^*$. In summary, an unbiased estimator of $\bA$ can be obtained in 
two steps:
\begin{enumerate}
\item Find $\hat{\bA}^{(0)}_{\text{AT}}$, an unbiased estimator of $\bP^*\bA$, 
by applying \eqref{eq:solution}, with $\bB$ replaced by $\bP^*\bB$ and $\bC$ 
replaced by $\bP^*\bC\bP^*$.
\item An unbiased estimator of $\bA$ is obtained by 
$\hat{\bA}_{\text{AT}}=\bP\hat{\bA}^{(0)}_{\text{AT}}$.
\end{enumerate}

If $N<D$, we use the following heuristic estimator. Let $\bP$ be a matrix of 
size $D\times N$ that is the projector to the row space of $\bB$, and assuming 
that $\hat{\bA}$, the estimator of $\bA$, has the same row space as $\bB$, then 
$\hat{\bA}=\hat{\bA}\bP\bP^*$, and it is sufficient to find $\hat{\bA}\bP$, an 
estimator of $\bA\bP$. With $(\bA\bP)(\bA\bP)^*=\bA\bA^*=\bC$ known and the 
fact that $\bA\bP$ is close to $\bB\bP$, we may use the estimator 
\eqref{eq:solution}. In summary, we use the following procedure:
\begin{enumerate}
\item Find $\hat{\bA}^{(0)}_{\text{AT}}$, an estimator of $\bA\bP$, by applying 
the estimator \eqref{eq:solution}, with $\bB$ replaced by $\bB\bP$.
\item An estimator of $\bA$ is obtained by 
$\hat{\bA}_{\text{AT}}=\hat{\bA}^{(0)}_{\text{AT}}\bP^*$.
\end{enumerate}
We remark that for $N<D$ there is no theoretical guarantee to show that it is an unbiased 
estimator, unlike the setting $N\geq D$. However, the assumption that 
$\hat{\bA}$ has the same column space as $\bB$ is reasonable, and the proposed 
estimator performs  well in practice.

\section{Estimation of the Covariance and Autocorrelation Matrices}
\label{Sec:oeat_algo}
The autocorrelation matrices $\bC_{\bl}$ in Kam's theory are derived from the covariance matrix $\boldsymbol{\Sigma}$ of the 2D Fourier transformed projection images through \cite{kam1980}
\begin{equation}\label{eq:kam_cl}
{\bC}_{\bl} (|k_1|,|k_2|) = 2\pi(2l+1) \int_0^\pi {{\Sigma}}(|k_1|,|k_2|,\psi) P_l(\cos{\psi}) \sin{\psi} d\psi
\end{equation}
where $\psi$ is the angle between the vectors $k_1$ and $k_2$ in the x-y plane.
We estimate the covariance matrix $\boldsymbol{\Sigma}$ of 
the underlying 2D Fourier transformed clean projection images using the method described in
\cite{Bhamre2016}. This estimation method provides a more accurate covariance compared to the classical sample covariance matrix \cite{vanHeel1981, vanHeel1984}. First, it corrects for the CTF. Second, it performs eigenvalue shrinkage, which is critical for high dimensional statistical estimation problems. Third, it exploits the block diagonal structure of the covariance matrix in a steerable basis, a property that follows from the fact that any experimental image is just as likely to appear in different in-plane rotations. A steerable basis consists of outer products of radial functions (such as Bessel functions) and Fourier angular modes. Each block along the diagonal corresponds to a different angular frequency \cite{Zhao1}. Moreover, the special block diagonal structure facilitates fast computation of the covariance matrix \cite{Zhao2016}.

Since the autocorrelation matrix $\bC_{\bl}$ estimated from projection images can have a rank exceeding $2l+1$, we first find its best rank $2l+1$ approximation via singular value decomposition, before computing its Cholesky decomposition. In the case of symmetric molecules, we use the appropriate rank as dictated by classical representation theory of $SO(3)$ \cite{Klein1914, cheng2013random} (less than $2l+1$).

\subsection{Algorithm 2: Orthogonal Extension by Anisotropic Twicing}
\begin{algorithm}
\caption{Orthogonal Extension}
\begin{algorithmic}[1]
\label{algo:OEAT}
\Procedure{Orthogonal Extension by Anisotropic Twicing (OE-AT): Estimate $\bA$ 
given $\bB \approx \bA$, subject to $\bC = \bA \bA^*$}{}\\
\Require $\bB \in \mathbb{C}^{N \times D}$, $\bC \in \mathbb{C}^{N \times N}$
\State Find any $\bF \in \mathbb{C}^{N \times D}$ such that $\bC=\bF \bF^*$
\State Calculate $\bB^* \bF$ and calculate its singular value decomposition 
$\bB^*\bF=\bU_0\bSigma_0\bV_0^*$.
\State Calculate the OE-LS estimator is  
$\hat{\bA}_{\text{LS}}=\bF\bV_0\bU_0^*$, (see Algorithm 1).
\State For $N=D$, the OE-AT estimator is given by 
$\hat{\bA}_{\text{AT}}=\bB+\bU\bW\bU^*(\hat{\bA_{\text{LS}}}-\bB)$.
\State For $N>D$, assuming that $\bP$ is the projector of size $N\times D$ to 
the $D$-dimensional subspace spanned by the columns of $\bC$, 
$\hat{\bA}_{\text{AT}}=\bP\hat{\bA}_{\text{AT}}^{(0)}.$
\State For $N<D$, assuming that $\bP$ is the projector of size $D\times N$ to 
the $N$-dimensional subspace in $\reals^D$ spanned by the rows of $\bB$, 
$\hat{\bA}_{\text{AT}}=\hat{\bA}_{\text{AT}}^{(0)}\bP^*.$
\EndProcedure
\end{algorithmic}
\end{algorithm}

\section{Numerical Experiments}
\subsection{Bias Variance Trade-off}
\label{subsec:biasvar}
For any parameter $\theta$, the performance of its estimator $\hat{\theta}$ can 
be measured in terms of its mean squared error (MSE), 
$\mathbb{E}||\theta-\hat{\theta}||^2$.
The MSE of any estimator can be decomposed into its bias and variance:
\begin{equation}
\label{eqn:biasvar}
\text{MSE}=\mathbb{E}[||\theta-\hat{\theta}||^2] = ||\text{Bias}||^2 + \text{Var}
\end{equation}
where
\begin{equation}
\text{Bias}= \mathbb{E}[\hat{\theta}]-\theta 
\end{equation}
and
\begin{equation}
\text{Var}= \mathbb{E}[||\hat{\theta}-\mathbb{E}[\hat{\theta}] ||^2]
\end{equation}
Unbiased estimators are often not optimal in terms of 
MSE, but they can be valuable for being unbiased. We performed a numerical 
experiment starting with a fixed $\bF \in \mathbb{R}^{10 \times 10}$ and an 
unknown matrix $\bA = \bF \bO$ where $\bO$ is a random orthogonal matrix. We 
are given a known similar matrix $\bB$ such that $\bA = \bB + \bE$. The goal is 
estimate $\bA$ given $\bB$ and $\bF$.  Figure \ref{fig:biasvar} shows a 
comparison of the bias and root mean squared error (RMSE) of different estimators averaged over $10000$ runs of 
the numerical experiment: the anisotropic twicing estimator, the twicing 
estimator, the least squares estimator, and estimators from the family of 
estimators for some values of $t$ in~Sec. \ref{sec:fam_estimators}. The figure demonstrates that the AT estimator 
is asymptotically unbiased at the cost of higher MSE. 
\begin{figure}[]\label{fig:biasvar}
\includegraphics[width=0.9\linewidth]{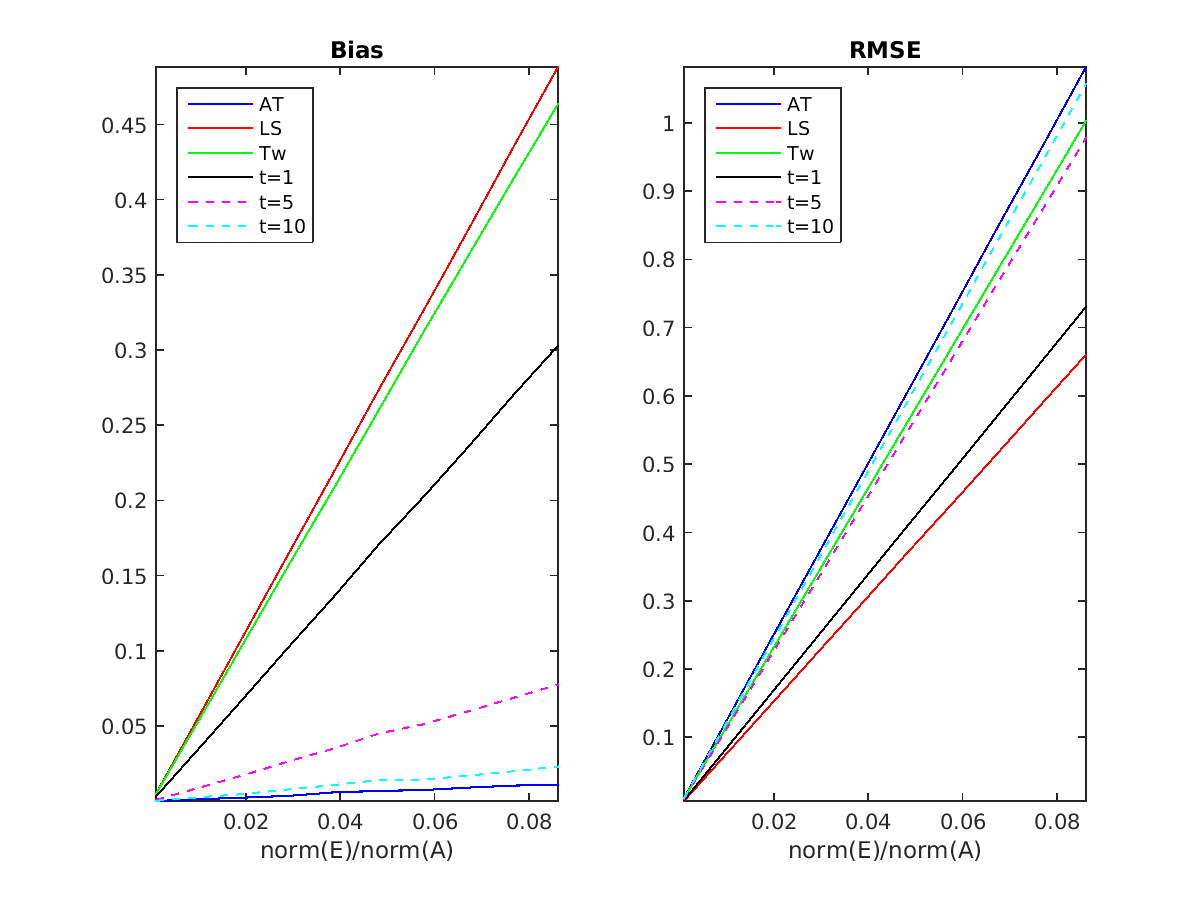}%
\caption{Bias and RMSE of the Anisotropic Twicing (AT), Least Squares (LS), 
Twicing (Tw) estimators and also the family of estimators with $t=1,5,10$  averaged over $10000$ experiments, as described in 
Sec.~\ref{subsec:biasvar}. The x-axis shows the relative perturbation 
$||\bE||/||\bA||$. }
\end{figure}

\subsection{Synthetic Dataset: Toy Molecule}
We perform numerical experiments with a synthetic dataset generated from an 
artificial `Mickey Mouse' molecule. The molecule $\bB$ is made up of 
ellipsoids, and the density is set to 1 inside the ellipsoids and to 0 outside. 
Fig. \ref{fig:mickey_recon} shows the artificial new volume $\bA = \bB + \bE$ 
created by adding a small ellipsoid $\bE$, which we will refer to here as the 
``nose'', to the original mickey mouse volume $\bB$. This represents the small 
perturbation $\bE$. When the Fourier volume $\bB + \bE$ is expanded in the 
truncated spherical Bessel basis described in Sec.~\ref{Sec:oeat_algo}, the 
average relative perturbation $||\bE_{\boldsymbol{l}}||/||\bA_{\boldsymbol{l}}||$ for the first few coefficients 
in the truncated spherical harmonic expansion for $l=1,\ldots,10$ is $8\%$.

Next, we generate $10000$ projection images from the volume $\bA$. We then 
employ OE to reconstruct the volume $\bA$ from $\bB$ and the clean projection 
images of $\bA$. Fig. \ref{fig:mickey_recon} shows the reconstructions obtained 
using each of the three estimators in the OE framework, visualized in Chimera \cite{chimera}. 
We note that while all three estimators are able to recover the additional 
subunit $\bE$, the AT estimator best recovers the unknown subunit to its 
correct relative magnitude. The relative error in the region of the unknown 
subunit $\bE$ is $59\%$ with least squares, $31\%$ with twicing and $19\%$ with 
anisotropic twicing.

\begin{figure}[]
\caption{A synthetic toy mickey mouse molecule with a small additional subunit, 
marked `E' in (a). We reconstruct the molecule $\bA$ from its clean projection 
images, given $\bB$. We show reconstructions obtained with the least squares 
estimator in (b), twicing estimator in (c), and AT estimator in (d). }
\includegraphics[width=0.95\linewidth]{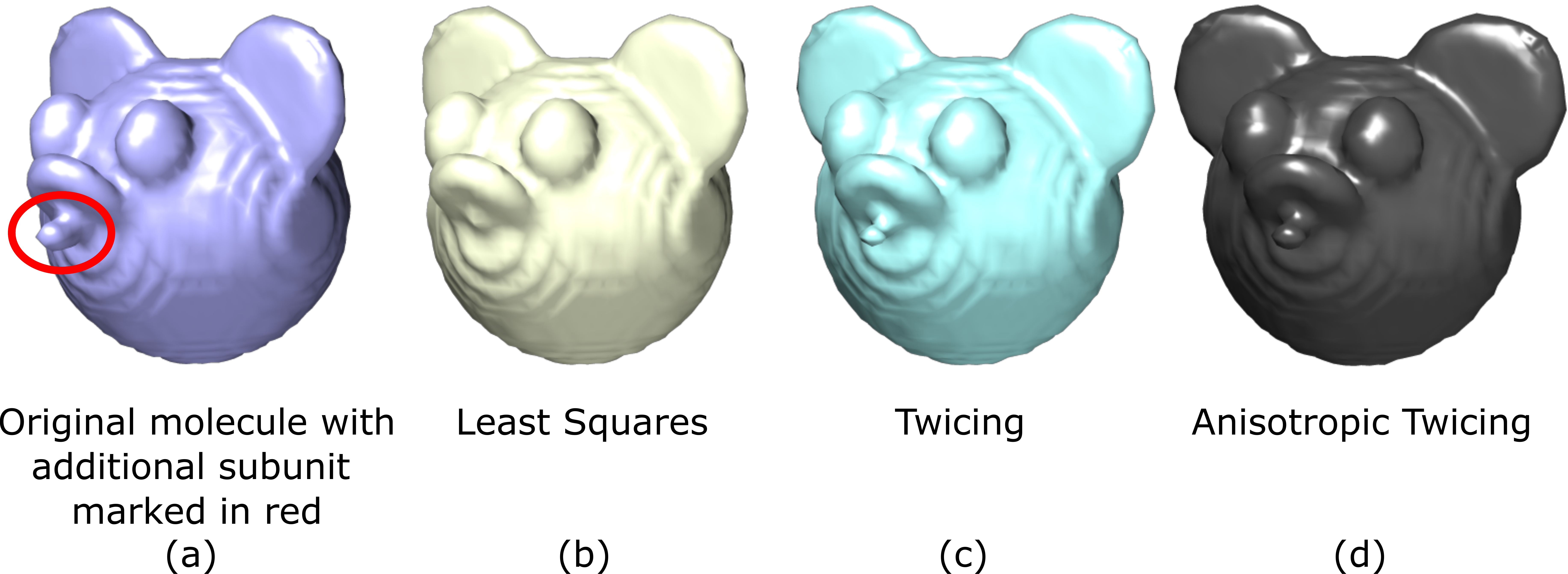}
\label{fig:mickey_recon}
\end{figure}

\subsection{Synthetic Dataset: TRPV1}

We perform numerical experiments with a synthetic dataset generated from the 
TRPV1 molecule (with imposed $C_4$ rotational symmetry) in complex with DkTx and RTX ($\bA$). This volume is available on EMDB 
as EMDB-8117. The small additional subunit is visible as an extension over the 
top of the molecule, shown in Fig. \ref{fig:simtrpv}(i). This represents the 
small perturbation $\bE$. 

Next, we generate $26000$ projection images from $\bA$, add 
the effect of both the CTF (the images are divided into $10$ defocus groups) and additive white Gaussian noise (SNR=$1/40$) and use OE 
to reconstruct the volume $\bA$. Fig. \ref{fig:simtrpv}(ii) 
shows the reconstructions obtained using each of the three estimators in the OE 
framework, visualized in Chimera. The $C_4$ symmetry was taken into account in the autocorrelation analysis by including in (2) only symmetry-invariant spherical harmonics $Y_l^m$ for which $m=0\mod4$. As seen earlier with the synthetic case, all 
three estimators are able to recover the additional subunit $\bE$, while the AT 
estimator best recovers the unknown subunit to its correct relative magnitude. 
The relative error in the unknown subunit is $43\%$ with least squares, 
$56\%$ with twicing and $30\%$ with anisotropic twicing. We note that 
this is the first successful attempt, even with synthetic data, at using OE for 
3D homology modeling directly from CTF-affected and noisy images (at experimentally relevant conditions). The numerical 
experiments using the Kv1.2 potassium channel in \cite{Bhamre2014} were at an unrealistically high SNR and did not include the effect of the CTF. The 
reason for this improvement is the improved covariance estimation 
\cite{Bhamre2016}.

\begin{figure}[]
\label{fig:simtrpv}
\caption{A synthetic TRPV1 molecule (EMDB 8118), with a small additional 
subunit DxTx and RTX (EMDB 8117), marked `E' in (i-b). We reconstruct the molecule from its noisy, CTF-affected images, and the homologous structure. In (ii), we show 
reconstructions obtained with the least squares, twicing and AT estimators 
using OE, along with the homologous structure and the ground truth projected on to the basis in  (ii-a) and (ii-b).}
\centering
\begin{tabular}{cc}
\includegraphics[width=0.45\linewidth]{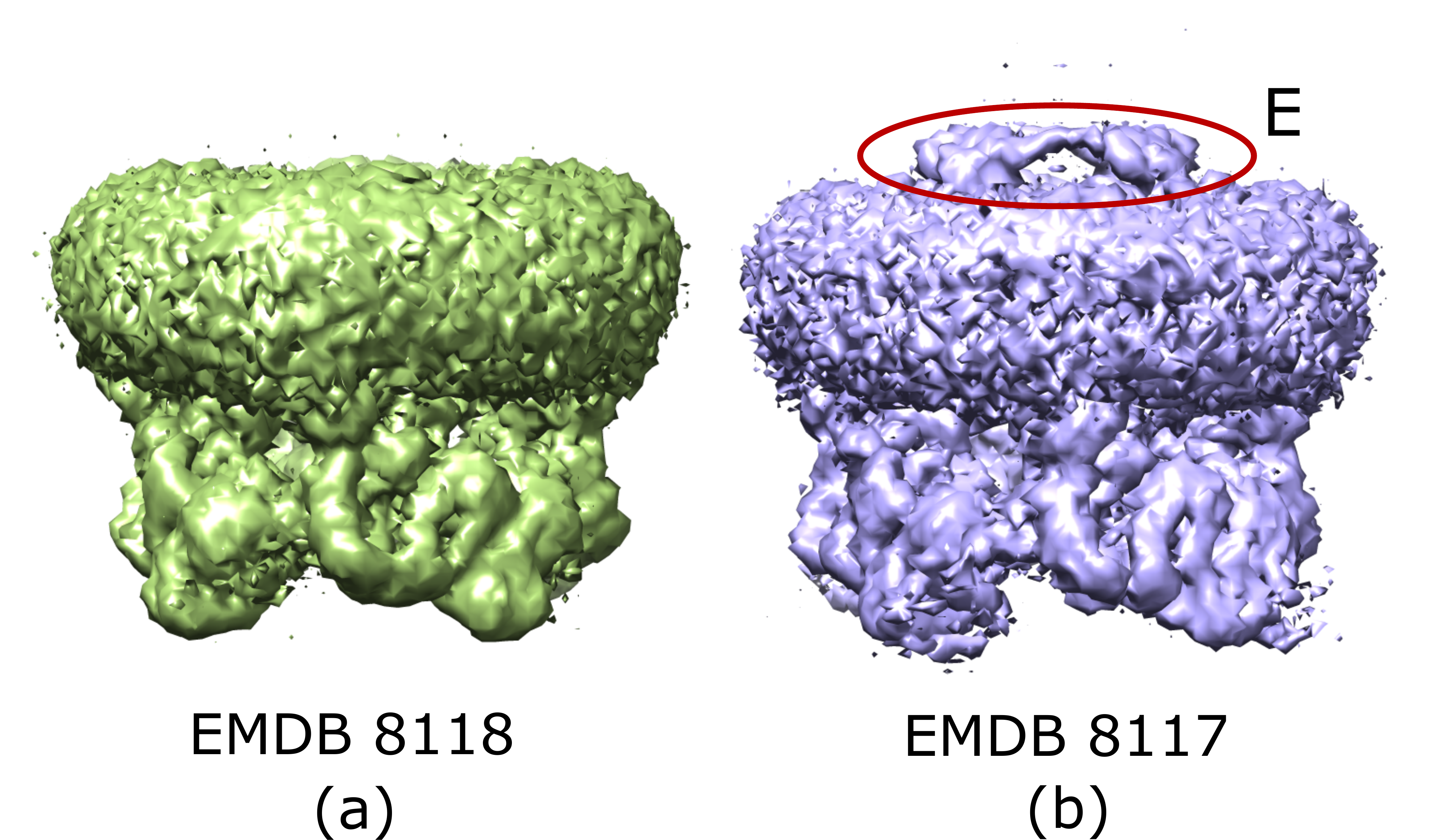}\label{fig:simtrpv_emdb} \\
(i) \\ 
\includegraphics[width=0.95\linewidth]{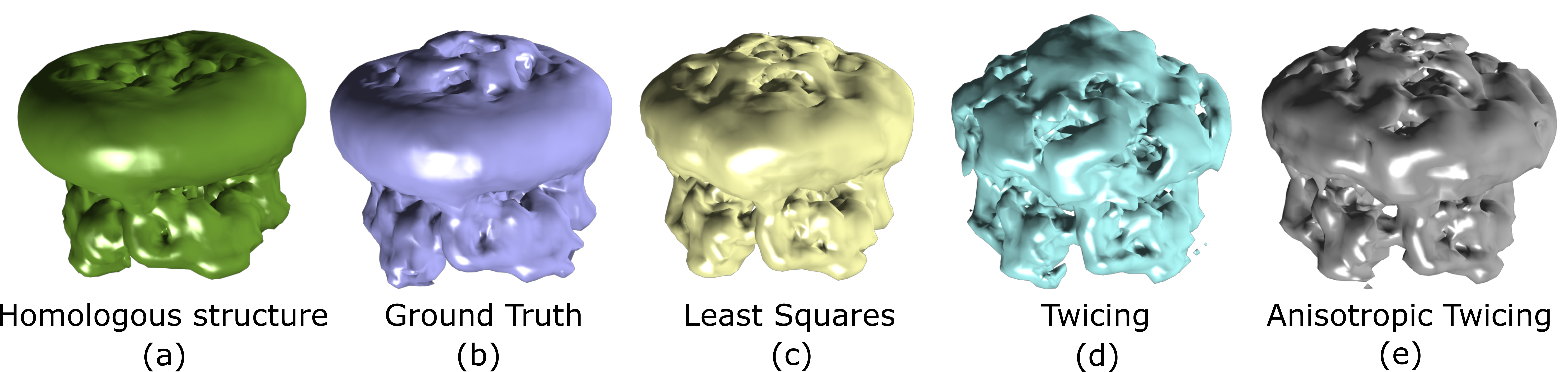}\label{fig:simtrpv_res}\\
(ii)\\
\end{tabular}
% \subfloat[a]{}%
%\label{fig:simtrpv1}}
%\hfil
%\subfloat[b]{\includegraphics[width=0.45\linewidth]{emd_8117_8118_compare.png}%
%\subfloat[c]{\includegraphics[width=0.45\linewidth]{
%emdb_8117_8_aligned_lowres.png}
%\label{fig:simtrpv2}}
%\label{fig:simtrpv}

\end{figure}

\subsection{Experimental Dataset: TRPV1}

We apply OE to an experimental data of the TRPV1 molecule in complex with DkTx and RTX, 
determined in lipid nanodisc, available on the public database Electron 
Microscopy Pilot Image Archive (EMPIAR) as EMPIAR-10059, and the 3D 
reconstruction is available on the electron microscopy data bank (EMDB) as EMDB-8117, courtesy of Y. Gao et al 
\cite{gao16}. The dataset provided consists of $73000$ motion corrected, picked 
particle images (which were used for the reconstruction in EMDB-8117) of size $192 \times 192$ with a pixel size $1.3\AA$. We use the 3D 
structure of TRPV1 alone as the similar molecule. This is available on the EMDB as 
EMDB-8118. The two structures differ only by the small DkTx and RTX subunit at 
the top, which can be seen in Fig.~\ref{fig:simtrpv}(i).

Since the noise in experimental images is colored while our covariance estimation procedure requires white noise, we first preprocess the raw images in order to ``whiten" 
the noise. We estimate the power spectrum of noise using the corner pixels of 
all images. The images are then whitened using the estimated noise power 
spectrum.

In the context of our mathematical model, the volume EMDB-8117 of TRPV1 with DkTx 
and RTX is the unknown volume $\bA$, and the volume EMDB-8118 of TRPV1 
alone is the known, similar volume $\bB$. We use OE to estimate $\bA$ 
given $\bB$ and the raw, noisy projection images of $\bA$ from an 
experimental dataset. 

The basis assumption in Kam's theory is that the distribution of viewing angles is uniform. This assumption is difficult to satisfy in
practice, since molecules in the sample can often have preference for certain orientations
due to their shape and mass distribution. The viewing angle distribution in EMPIAR-10059
is non-uniform (see Fig.~\ref{Fig:viewing_angles}). As a robustness test of our methods, we attempt
3D reconstruction with (i) all images, such that the viewing angle distribution is non-uniform, as
well as (ii) by sampling images such that the viewing angle distribution of the images is 
approximately uniform (as shown in Fig.~\ref{Fig:viewing_angles}). We obtained the
final viewing angles estimated after refinement from the Cheng lab at UCSF \cite{gao16}. Our
sampling procedure is as follows: we choose $10000$ points at random from the uniform distribution on the sphere and classify each image into
these $10000$ bins based on the point closest to it. We discard bins that have no images, and for the remaining
bins we pick a maximum of $3$ points per bin. We use the selected images (slighty less less than $30000$) for reconstruction with
roughly uniform distributed viewing angles.

The reconstructed 3D volumes are 
shown in Fig.~\ref{fig:realtrpv}. We note that the additional subunit is 
recovered at the right location, and roughly to the expected size, 
using all three estimators. This is the first instance of reconstructing a 3D 
model directly from raw experimental images, without any class
averaging or iterative refinement, by employing OE. The Fourier cross resolution (FCR) of the reconstruction with the `ground truth' EMDB-8117 is shown in Fig.~\ref{Fig:fsc}.

The algorithm is
implemented in the UNIX environment, on a machine with 60 cores,
running at 2.3 GHz, with total RAM of 1.5TB. Using 20 cores, the total time taken here for preprocessing (whitening, background normalization. etc.) the 2D images and computing the covariance matrix was 1400 seconds. Calculating the autocorrelation matrices using~\eqref{eq:kam_cl} involves some numerical integration (eq. 7.15 in \cite{cheng2013random}) which took 790 seconds, but for a fixed $c$ and $R$ (satisfied for datasets of roughly similar size and quality) these can be precomputed. Computing the basis functions and calculating the coefficient matrices $\bA_{\bl}$  of the homologous structure took 30 seconds and recovering the 3D structure by applying the appropriate estimator (AT, twicing, or LS) and computing the volume from the estimated coefficients took 10 seconds.

\begin{figure}[]
\label{fig:realtrpv}
\caption{OE with an experimental data of the TRPV1 in complex with DkTx and RTX (EMPIAR-10059) whose 3D 
reconstruction is available as EMDB-8117. 3D reconstructions with OE using the least squares, anisotropic twicing, and twicing estimators: (i) With (slightly less than $30000$) images selected by sampling to impose approximately uniform viewing angle distribution (ii) With all $73000$ images such that the viewing angle distribution is non-uniform (see Fig.~\ref{Fig:viewing_angles}).}
\centering
\begin{tabular}{cc}
\includegraphics[width=0.9\linewidth]{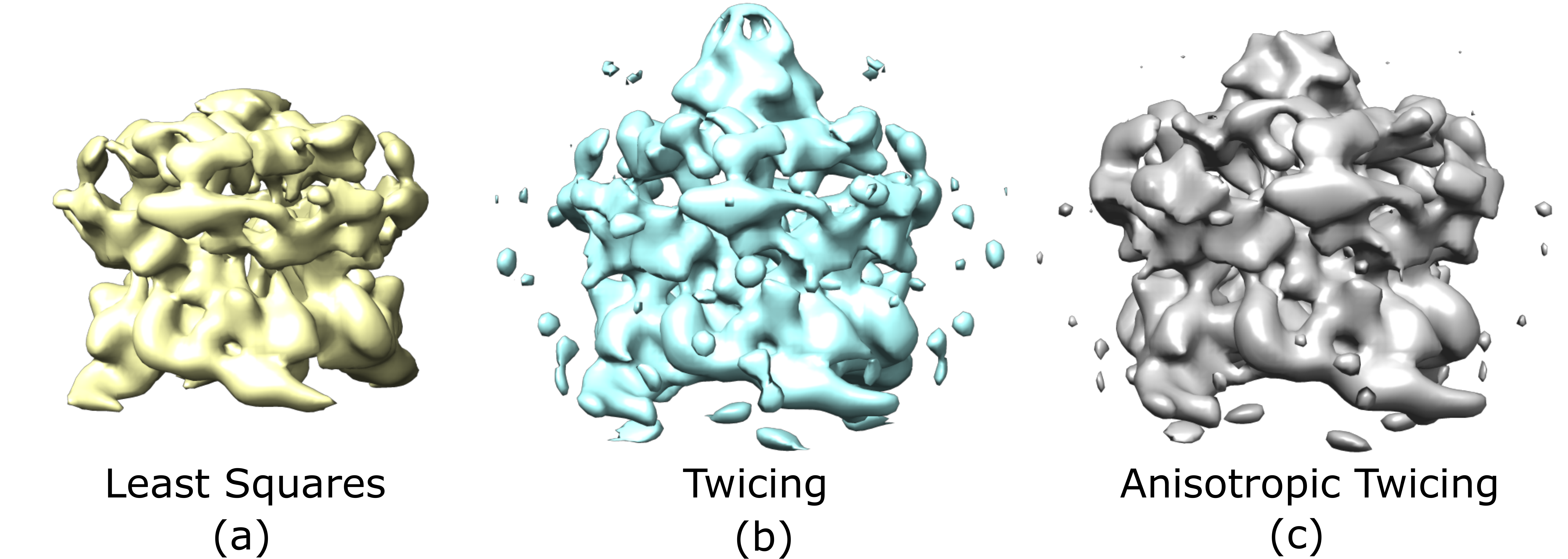} \\
(i) \\ 
\includegraphics[width=0.9\linewidth]{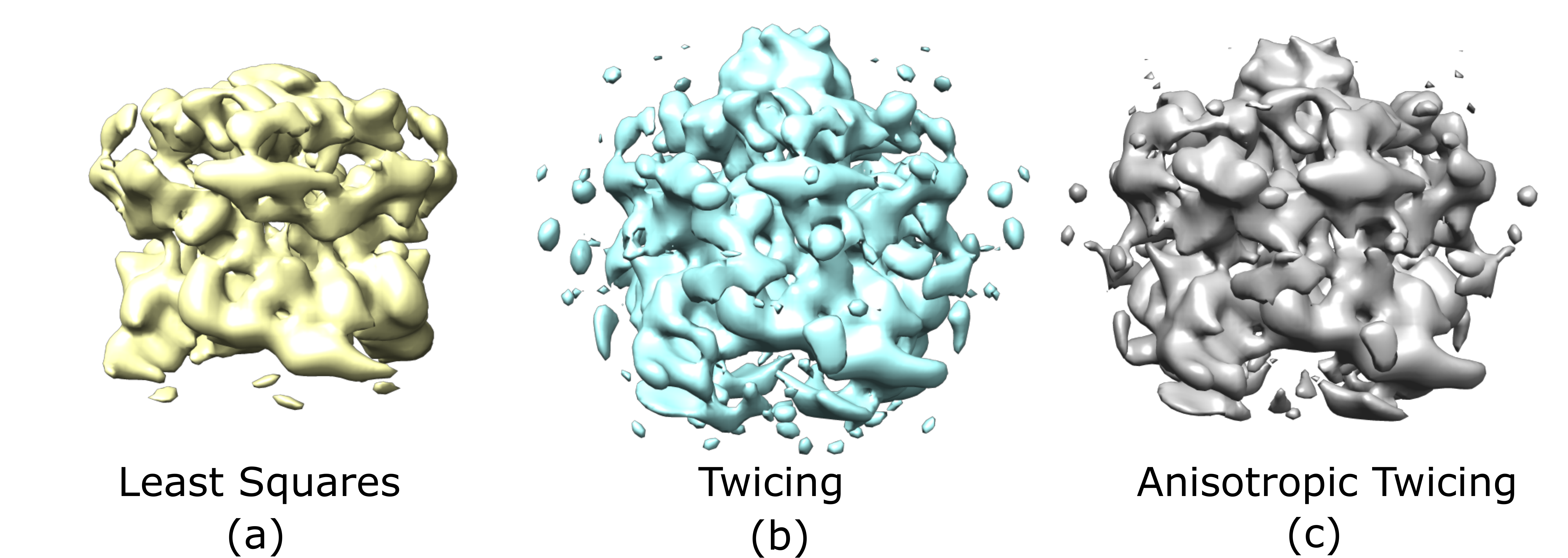}\\
(ii)\\
\end{tabular}
% \subfloat[a]{}%
%\label{fig:simtrpv1}}
%\hfil
%\subfloat[b]{\includegraphics[width=0.45\linewidth]{emd_8117_8118_compare.png}%
%\subfloat[c]{\includegraphics[width=0.45\linewidth]{
%emdb_8117_8_aligned_lowres.png}
%\label{fig:simtrpv2}}
%\label{fig:simtrpv}
\end{figure}

\begin{figure}[]
\caption{Viewing angle distribution of images in the dataset EMPIAR-10059: (i) Non-uniform distribution in the raw dataset. The visualization here shows centroids of the bins that the sphere is divided into. The color of each point is assigned based on the number of points in the bin, yellow being the largest, representing the most dense bin, and blue being the smallest. (ii) Approximately uniform distribution after sampling.}
\centering
\begin{tabular}{cc}
\includegraphics[width=0.47\textwidth]{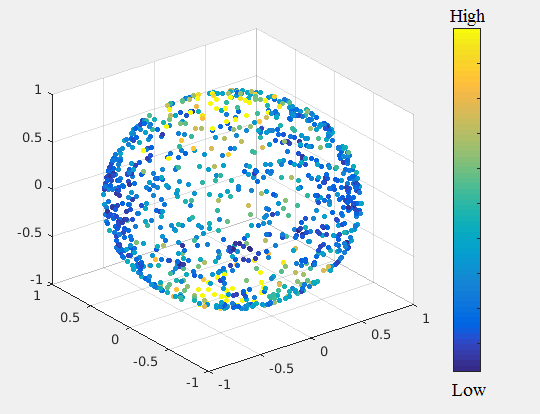}
&\includegraphics[width=0.47\textwidth]{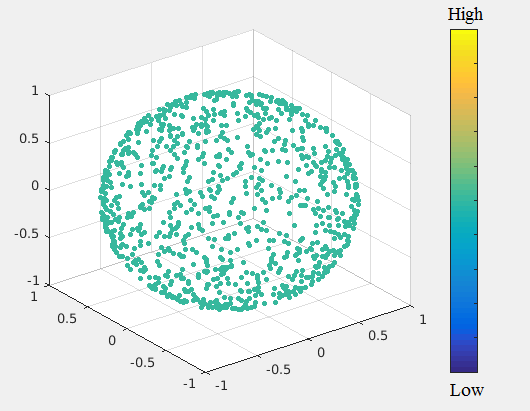}\\
(i) &(ii)\\
\end{tabular}
% \subfloat[a]{}%
%\label{fig:simtrpv1}}
%\hfil
%\subfloat[b]{\includegraphics[width=0.45\linewidth]{emd_8117_8118_compare.png}%
%\subfloat[c]{\includegraphics[width=0.45\linewidth]{
%emdb_8117_8_aligned_lowres.png}
%\label{fig:simtrpv2}}
%\label{fig:simtrpv}
\label{Fig:viewing_angles}
\end{figure}

%\begin{figure}[]
%\caption{FCR curve for the reconstruction obtained by OE using the least squares, twicing, anisotropic twicing estimators corresponding to Fig.~\ref{fig:realtrpv}(i)}
%\centering
%\begin{tabular}{c}
%\includegraphics[width=0.95\linewidth]{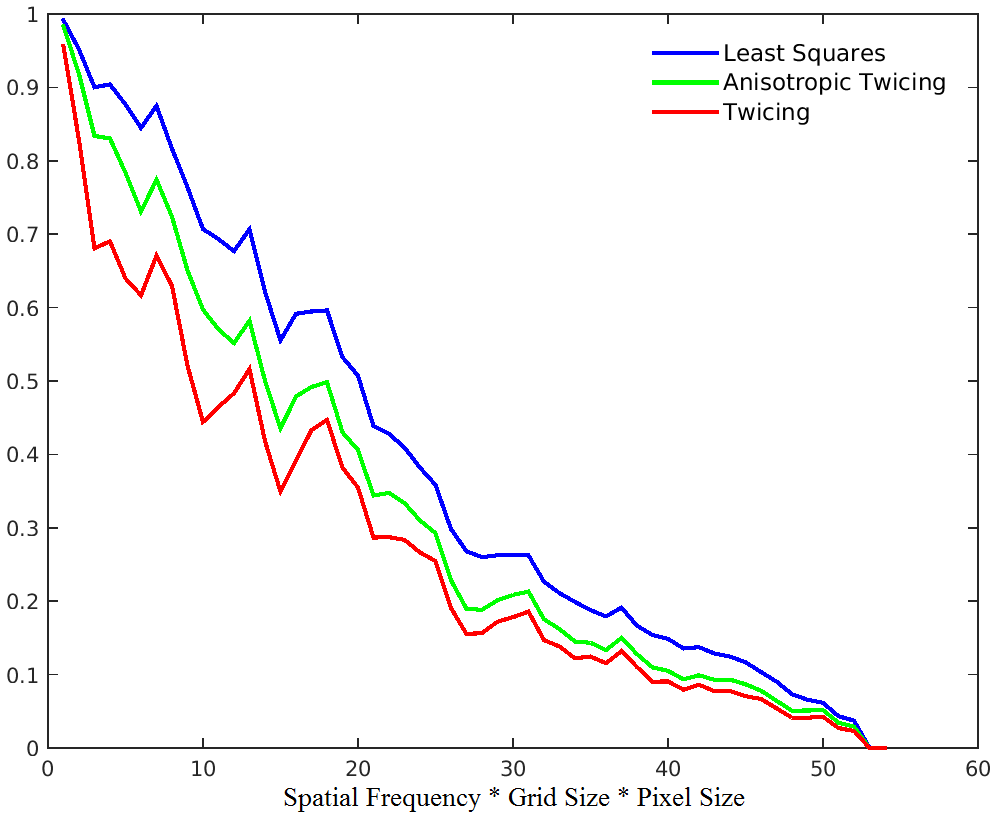} 
%\end{tabular}
%% \subfloat[a]{}%
%%\label{fig:simtrpv1}}
%%\hfil
%%\subfloat[b]{\includegraphics[width=0.45\linewidth]{emd_8117_8118_compare.png}%
%%\subfloat[c]{\includegraphics[width=0.45\linewidth]{
%%emdb_8117_8_aligned_lowres.png}
%%\label{fig:simtrpv2}}
%%\label{fig:simtrpv}
%\label{Fig:fsc}
%\end{figure}

\begin{figure}[]
\caption{(i) FCR curve for the reconstruction of the entire molecule obtained by OE using the least squares, twicing, anisotropic twicing estimators corresponding to Fig.~\ref{fig:realtrpv}(i). (ii) FCR curve for the reconstruction of the unknown subunit obtained by OE using the least squares, twicing, anisotropic twicing estimators corresponding to Fig.~\ref{fig:realtrpv}(i). We also show the FCR of the masked homologous volume (EMDB-8118) to show the improvement in FCR obtained using OE.}
\centering
\begin{tabular}{cc}
\includegraphics[width=0.47\textwidth]{realdat_109_fsc.png}
&\includegraphics[width=0.47\textwidth]{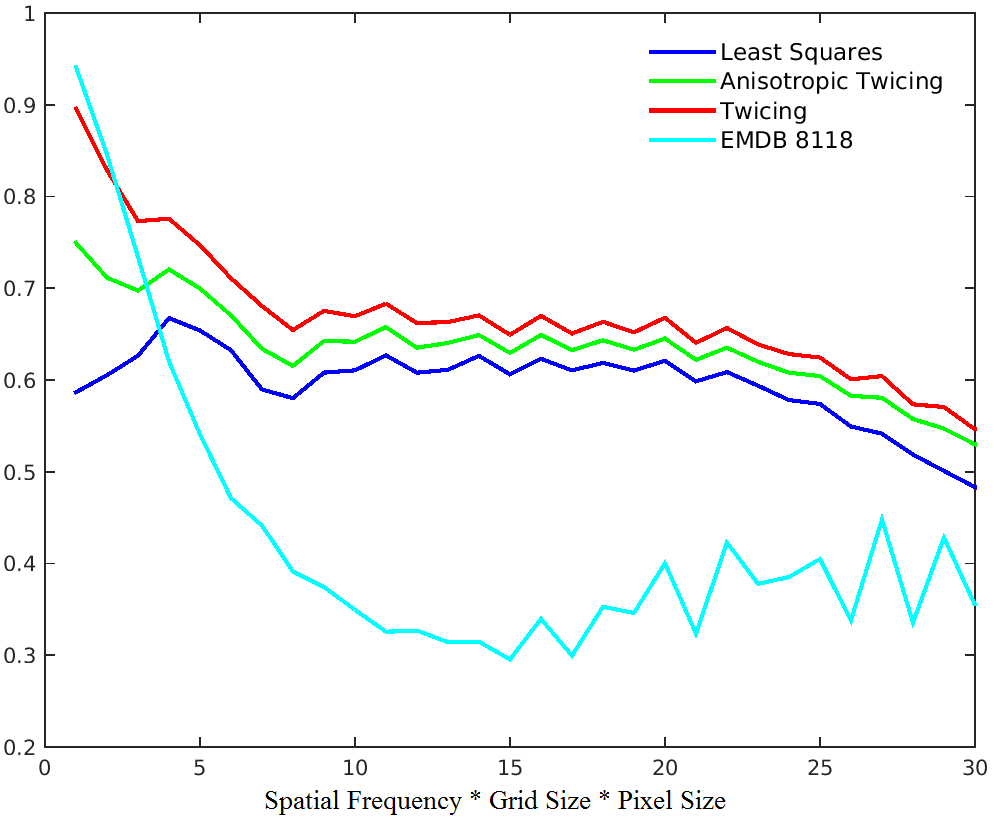}\\
(i) &(ii)\\
\end{tabular}
% \subfloat[a]{}%
%\label{fig:simtrpv1}}
%\hfil
%\subfloat[b]{\includegraphics[width=0.45\linewidth]{emd_8117_8118_compare.png}%
%\subfloat[c]{\includegraphics[width=0.45\linewidth]{
%emdb_8117_8_aligned_lowres.png}
%\label{fig:simtrpv2}}
%\label{fig:simtrpv}
\label{Fig:fsc}
\end{figure}

\section{Conclusion}

The orthogonal retrieval problem in SPR  is akin to the phase retrieval problem \cite{Burvall2011, Liu2008, Pfeiffer2006}
in X-ray crystallography. In crystallography,  
the measured diffraction patterns contain information about the modulus of the
3D
Fourier transform of the structure but the phase information is missing and
needs to be obtained by other means. In crystallography,
the particle's orientations are known but
the phase of the Fourier coefficients is missing, while in cryo-EM, the projection images contain phase information but the orientations
of the
particles are missing. Kam's autocorrelation analysis for SPR leads to an orthogonal retrieval problem which is analogous to
the phase retrieval problem in crystallography. 
The phase retrieval problem is perhaps more challenging than
the orthogonal matrix retrieval problem in cryo-EM. In crystallography
each Fourier coefficient is missing its phase, while in cryo-EM only a single
orthogonal matrix is missing per several radial components. For each $l$, the unknown coefficient matrix $\bA_{\bl}$ is of size $S_l \times (2l+1)$, corresponding to $(2l+1)$ radial functions. Each $\bA_{\bl}$ is to be obtained from $\bC_{\bl}$, which is a positive semidefinite matrix of size $S_l \times S_l$ and rank at most $2l+1$. For $S_l>2l+1$, instead of estimating $S_l(2l+1)$ coefficients, we only need to estimate an orthogonal matrix in $\text{O}(2l+1)$ which allows $l(2l+1)$ degrees of freedom. Therefore there are $(S_l-l)(2l+1)$ fewer parameters to be estimated.

It is important to note that the main requirement 
for OE to succeed is that there are sufficiently many images to estimate the 
covariance matrix to the desired level of accuracy, so it has a much greater 
chance of success for homology modeling from very noisy images than other 
ab-initio methods such as those based on common lines, which fail at very high noise 
levels.

In this paper, we find a general magnitude correction scheme for the class of
`phase-retrieval' problems, in particular, for Orthogonal Extension in cryo-EM.
The magnitude correction scheme is a generalization of `twicing' that is
commonly used in molecular replacement. We derive an asymptotically unbiased estimator and demonstrate 3D homology modeling using OE with synthetic and experimental datasets. We foresee this method as a good way to provide models to initialize refinement, directly from experimental images without performing class averaging and orientation estimation in cryo-EM and XFEL.

While Anisotropic Twicing outperforms least squares and twicing for synthetic data, the three estimation methods have similar performance for experimental data. One possible explanation is that the underlying assumption made by all estimation methods that $\bC_{\boldsymbol{l}}$ are noiseless as implied by imposing the constraint $\bC_{\boldsymbol{l}}=\bA_{\boldsymbol{l}} \bA_{\boldsymbol{l}}^*$, is violated more severely for experimental data. Specifically, the $\bC_{\boldsymbol{l}}$ matrices are derived from the 2D covariance matrix of the images, and estimation errors are the result of noise in the images, finite number of images available, non-uniformity of viewing directions, and imperfect estimation of individual image noise power spectrum, contrast transfer function, and centering. These effects are likely to be more pronounced in experimental data compared to synthetic data. As a result, the error in estimating the $\bC_{\boldsymbol{l}}$ matrices from experimental data is larger. The error in the estimated $\bC_{\boldsymbol{l}}$ can be taken into consideration by replacing the constrained least squares problem (1) with the regularized least squares problem
 
\begin{equation}\label{eq:future_cl}
\min_{\bA} \lVert\bA-\bB\rVert_F^2+\lambda\lVert\bC-\bA\bA^*\rVert_F^2
\end{equation}
where $\lambda>0$ is a regularization parameter that would depend on the spherical harmonic order $l$. A comprehensive analysis of~\eqref{eq:future_cl} and its application to experimental datasets will be the subject of future work.     
 
\section{Acknowledgment}
We are indebted to Garib Murshudov for motivating this work with his suggestion to explore unbiased estimators. We would like to thank Xiuyuan Cheng and Zhizhen Zhao for many helpful 
discussions about this work. We are grateful for  discussions about 
experimental datasets with Daniel Asarnow, Xiaochen Bai, Adam Frost, Yuan Gao, David Julius, Eugene Palovcak, Yoel Shkolnisky, and Fred Sigworth. The authors were partially supported by Award Number R01GM090200 from the NIGMS, FA9550-12-1-0317 from AFOSR, the Simons Investigator Award and the Simons Collaboration on Algorithms and Geometry, and the Moore Foundation Data-Driven Discovery Investigator Award.

%\newpage
\bibliographystyle{iucr}
\bibliography{iucr1}

\newpage

\section{Appendix: Proof of Theorem~\ref{thm:main}}
\label{sec:proof}
\subsection{Explicit expression of $\hat{\bA}_{\text{LS}}$}
%In this section, we prove the following explicit expression of $\hat{\bX}$ when :

% Otherwise
% \begin{equation}\label{eq:hatX}
% \hat{\bA}=\bA[\bA^*(\bA+\bE)(\bA+\bE)^*\bA]^{-0.5}\bA^*(\bA+\bE).
% \end{equation}

Since $\hat{\bA}_{\text{LS}}$ is independent of the choice of $\bF$ in the algorithm, we may assume that $\bF=\bA$ without loss of generality. Let $\bE=\bA-\bB$, then by assumption, $\bE$ is fixed, and 
\begin{align*}
\bV_0\bU_0^*=&(\bV_0\bSigma_0\bU_0^*)(\bU_0\bSigma_0^{-1}\bU_0^*)=(\bV_0\bSigma_0\bU_0^*)[\bU_0\bSigma_0^{2}\bU_0^*]^{-0.5}\\=&\bA^*(\bA-\bE)[(\bA-\bE)^*\bA\bA^*(\bA-\bE)]^{-0.5}.
\end{align*}
Therefore,
\begin{equation}\label{eq:hatX}
\hat{\bA}_{\text{LS}}=\bA\bA^*(\bA-\bE)[(\bA-\bE)^*\bA\bA^*(\bA-\bE)]^{-0.5}.
\end{equation}

% \subsection{The case $N=D$: explicit expression of $\hat{\bA}_{\text{LS}}$} 
% This section gives an explicit expression of $\hat{\bA}_{\text{LS}}$ in \eqref{eq:expression5} for the special case $N=D$.

%When $N=D$, we may 
Applying \eqref{eq:hatX}, we may simplify $\hat{\bA}_{\text{LS}}$ further as follows:
\begin{align}
\hat{\bA}_{\text{LS}}=&(\bA-\bE)^{*\,-1}(\bA-\bE)^*\bA\bA^*(\bA-\bE)[(\bA-\bE)^*\bA\bA^*(\bA-\bE)]^{-0.5}
\nonumber\\=&(\bA-\bE)^{*\,-1}[(\bA-\bE)^*\bA\bA^*(\bA-\bE)]^{0.5}.\label{eq:expression1}
\end{align}
Since $\bA\bA^*=\bC$, we may assume that the SVD decomposition of $\bA$ be given by $\bA=\bU\bSigma\bV^*$, where $\bSigma=\diag(\sigma_1, \sigma_2, \cdots, \sigma_D)$. Let $\bE_0=\bU^*\bE\bV$, then applying the derivative of matrix inversion we have
\begin{align}\nonumber
(\bA-\bE)^{*\,-1}= &\bA^{*\,-1} + \bA^{*\,-1}\bE^* \bA^{*\,-1} + O(\|\bE\|_F^2)\\\nonumber=& \bU\bSigma^{-1}\bV^* + (\bU\bSigma^{-1}\bV^*)\bE^*(\bU\bSigma^{-1}\bV^*)  + O(\|\bE\|_F^2)\\=&  \bU\bSigma^{-1}\bV^* +\bU\bSigma^{-1}\bE_0^*\bSigma^{-1}\bV^* + O(\|\bE\|_F^2). \label{eq:expression2}
\end{align}
We also have
\begin{align*}
(\bA-\bE)^*\bA=\bA^*\bA-\bE^*\bA=\bV\bSigma^2\bV^*-\bE^*\bU\bSigma\bV^*=\bV[\bSigma^2-\bE_0^*\bSigma]\bV^*
\end{align*}
and similarly, $\bA^*(\bA-\bE)=\{\bV[\bSigma^2-\bE_0^*\bSigma]\bV^*\}^*=\bV[\bSigma^2-\bSigma\bE_0]\bV^*$.
Then
\begin{align}
\nonumber\{(\bA-\bE)^*\bA\bA^*(\bA-\bE)\}^{0.5}
=&\{\bV[\bSigma^2-\bE_0^*\bSigma][\bSigma^2-\bSigma\bE_0]\bV^*\}^{0.5}
\\\nonumber
=&\bV\{[\bSigma^2-\bE_0^*\bSigma][\bSigma^2-\bSigma\bE_0]\}^{0.5}\bV^*
\\=&\bV[\bSigma^4-\bE_0^*\bSigma^3-\bSigma^3\bE_0+O(\|\bE\|_F^2)]^{0.5}\bV^*\label{eq:expression3}
\end{align}
Applying Lemma~\ref{lemma:deri}, we have that 
\begin{equation}\label{eq:expression4}
[\bSigma^4-\bE_0^*\bSigma^3-\bSigma^3\bE_0+o(\|\bE\|_F)]^{0.5}
=\bSigma^2 + \bZ + o(\|\bE\|_F),
\end{equation}
where the $ij$-th entry of $\bZ$ is given by \[
\bZ_{ij}=-\frac{\bE_{0,ji}^*\sigma_j^3+\bE_{0,ij}\sigma_i^3}{\sigma_i^2+\sigma_j^2}.
\]
Combining \eqref{eq:expression1}-\eqref{eq:expression4}, we have
\begin{align}
\nonumber\hat{\bA}_{\text{LS}}=&[\bU\bSigma^{-1}\bV^* +\bU\bSigma^{-1}\bE_0^*\bSigma^{-1}\bV^*] \bV[\bSigma^2 + \bZ]\bV^* +o(\|\bE\|_F)
\\\nonumber=&\bA + [\bU\bSigma^{-1}\bV^*]\bV\bZ\bV^*+\bU\bSigma^{-1}\bE_0^*\bSigma^{-1}\bV^*\bV \bSigma^2 \bV^*+o(\|\bE\|_F)
\\
=&\bA+\bU[\bSigma^{-1}\bZ+\bSigma^{-1}\bE_0^*\bSigma]\bV^*+o(\|\bE\|_F),
\label{eq:expression5}\end{align}
and the $ij$-th entry of $[\bSigma^{-1}\bZ+\bSigma^{-1}\bE_0^*\bSigma]$ can be explicitly written down by
\begin{align*}
\sigma_i^{-1}\bZ_{ij}+\sigma_i^{-1}\bE_{0,ji}^*\sigma_j
=&-\frac{\bE_{0,ji}^*\sigma_j^3+\bE_{0,ij}\sigma_i^3}{\sigma_i(\sigma_i^2+\sigma_j^2)}+\bE_{0,ji}^*\frac{\sigma_j}{\sigma_i}
\\=&\bE_{0,ji}^*\frac{\sigma_i\sigma_j}{\sigma_i^2+\sigma_j^2}-\bE_{0,ij}\frac{\sigma_i^2}{\sigma_i^2+\sigma_j^2}.
\end{align*}

\subsection{Expectation when $\bV$ is uniformly distributed}
From the analysis in the previous section, we have
\[
\Expect(\hat{\bA}_{\text{LS}}-\bA)=\bU\,\Expect_{\bV}[\bSigma^{-1}\bZ+\bSigma^{-1}\bE_0^*\bSigma]\bV^*+o(\|\bE\|_F),
\]
when $\bV$ is uniformly distributed on the set of all orthogonal matrices or all unitary matrices.

Now let us check the $ik$-th entry of $\Expect_{\bV}[\bSigma^{-1}\bZ+\bSigma^{-1}\bE_0^*\bSigma]\bV^*$, which is
\begin{align*}
&\sum_{j}\Big[\bE_{0,ji}^*\frac{\sigma_i\sigma_j}{\sigma_i^2+\sigma_j^2}-\bE_{0,ij}\frac{\sigma_i^2}{\sigma_i^2+\sigma_j^2}\Big]\bV_{kj}^*\\
=&\sum_{j}\Big[(\sum_{m,n}\bU^*_{mj}\bE_{mn}\bV_{ni})^*\frac{\sigma_i\sigma_j}{\sigma_i^2+\sigma_j^2}-\sum_{m,n}\bU^*_{mi}\bE_{mn}\bV_{nj}\frac{\sigma_i^2}{\sigma_i^2+\sigma_j^2}\Big]\bV_{kj}^*.\end{align*}
Applying the facts that for real-values matrices $\bV$ we have
\[
\Expect_{\bV}\bV_{ij}\bV_{mn}=\begin{cases} \frac{1}{D},\,\,\text{if $i=m$ and $j=n$}\\
0,\,\,\text{otherwise},
\end{cases}
\]
and for complex-valued matrices $\bV$, $\Expect_{\bV}\bV_{ij}\bV_{mn}=0$ for all $(i,j,m,n)$, and
\[
\Expect_{\bV}\bV_{ij}\bV_{mn}^*=\begin{cases} \frac{1}{D},\,\,\text{if $i=m$ and $j=n$}\\
0,\,\,\text{otherwise},
\end{cases}
\]
its expectation is given by
\begin{align*}&\frac{1}{D}\Big\{\sum_{m}(\bU_{mi}^*\bE_{mk})^*\frac{1}{2} - \sum_{m,j}\bU_{mi}^*\bE_{mk}\frac{\sigma_i^2}{\sigma_i^2+\sigma_j^2}\Big\}\\
=&\begin{cases}\frac{1}{D}\Big\{\frac{1}{2}[\bU^*\bE]_{ik} - [\bU^*\bE]_{ik}\sum_{j}\frac{\sigma_i^2}{\sigma_i^2+\sigma_j^2}\Big\},\,\,\,&\text{when $\bV$ is real-valued,}\\
-\frac{1}{D} [\bU^*\bE]_{ik}\sum_{j}\frac{\sigma_i^2}{\sigma_i^2+\sigma_j^2},\,\,\,&\text{when $\bV$ is complex-valued.}
\end{cases}
\end{align*}
Combining these elementwise expectations into a matrix, $\Expect_{\bV}[\bSigma^{-1}\bZ-\bSigma^{-1}\bE_0^*\bSigma]\bV^*=-\bT\bU^*\bE$. Therefore, we have
\begin{equation}
\Expect(\hat{\bA}_{\text{LS}}-\bA)=\bU\,\Expect_{\bV}[\bSigma^{-1}\bZ-\bSigma^{-1}\bE_0^*\bSigma]\bV^*
=-\bU\bT\bU^*\bE+o(\|\bE\|_F).
\end{equation}

\subsection{Lemmas}
\begin{lemma}\label{lemma:deri}
For a diagonal matrix $\bX=\diag(x_1,x_2,\cdots, x_D)$,  the $ij$-th entry of
$(\bX+\bE)^{0.5}-\bX^{0.5}$ is given by
\[
[(\bX+\bE)^{0.5}-\bX^{0.5}]_{ij}= [\bE_{ij}\cdot \frac{1}{x_i^{0.5}+x_j^{0.5}}]+
o(\|\bE\|_F).
\]
\end{lemma}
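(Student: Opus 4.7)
The plan is to look for a first-order expansion of $\bY := (\bX+\bE)^{1/2}$ in the form $\bY = \bX^{1/2} + \bD$, where $\bD$ is the unknown correction. Squaring yields $\bX + \bE = \bY^2 = \bX + \bX^{1/2}\bD + \bD\bX^{1/2} + \bD^2$, so that $\bD$ must satisfy the Sylvester-type identity
\[
\bX^{1/2}\bD + \bD\bX^{1/2} = \bE - \bD^2.
\]
The key observation is that because $\bX^{1/2}$ is diagonal, this equation decouples entrywise: its $(i,j)$ component reads $(x_i^{1/2}+x_j^{1/2})\bD_{ij}=\bE_{ij}-(\bD^2)_{ij}$.

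Next I would invoke positivity of $\bX$, which matches the regime in which the lemma is actually applied in \eqref{eq:expression4} (there $\bX$ plays the role of $\bSigma^4$ with $\bSigma$ the strictly positive singular values of $\bA$). Positivity guarantees that every denominator $x_i^{1/2}+x_j^{1/2}$ is bounded away from zero, so the Sylvester operator $\bD\mapsto \bX^{1/2}\bD+\bD\bX^{1/2}$ is boundedly invertible. Together with the real-analyticity of the principal matrix square root in a neighborhood of $\bX$, this forces $\|\bD\|_F = O(\|\bE\|_F)$, hence $\|\bD^2\|_F = O(\|\bE\|_F^2) = o(\|\bE\|_F)$. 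Substituting this estimate back into the entrywise equation gives
\[
\bD_{ij} = \frac{\bE_{ij}}{x_i^{1/2} + x_j^{1/2}} + o(\|\bE\|_F),
\]
which is precisely the claimed expansion.

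The only genuinely nontrivial point is justifying the a priori bound $\|\bD\|_F=O(\|\bE\|_F)$; the algebraic manipulation above tacitly uses it. I see two clean routes. Route (i): appeal directly to analyticity of $\bZ\mapsto \bZ^{1/2}$ at the positive matrix $\bX$, so that $\bD$ is a convergent power series in $\bE$ starting at first order, and the first-order coefficient must then agree with the entrywise formula derived above by uniqueness. Route (ii): use the integral representation $\bZ^{1/2} = \pi^{-1}\int_0^\infty t^{-1/2}\bigl(\bI - (\bI + t\bZ^{-1})^{-1}\bigr)\,dt$ for positive $\bZ$ and differentiate under the integral at $\bZ=\bX$; using the standard Neumann expansion $(\bI+t\bX^{-1}+t\bX^{-1/2}(\bX^{-1/2}\bE\bX^{-1/2})\bX^{-1/2}+\cdots)^{-1}$ and a change of variables in $t$ reproduces the same coefficient $(x_i^{1/2}+x_j^{1/2})^{-1}$ by direct computation. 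Either route makes the remainder estimate automatic and closes the proof.
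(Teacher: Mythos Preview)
Your proposal is correct and follows essentially the same approach as the paper: write $(\bX+\bE)^{1/2}=\bX^{1/2}+\bD$, square, and use the diagonal structure of $\bX^{1/2}$ to decouple the resulting Sylvester equation entrywise. You are in fact more careful than the paper in explicitly addressing the a priori bound $\|\bD\|_F=O(\|\bE\|_F)$ needed to convert the remainder $o(\|\bD\|_F)$ into $o(\|\bE\|_F)$; the paper's proof leaves this step implicit.
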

\begin{proof}
The proof is based on the following observation: if $\bY$ is diagonal and $\bC$
is small, 
\[
(\bY+\bC)^2-\bY^2=  \bY\bC+\bC\bY +\bC^2 = [\bC_{ij}\cdot (y_i+y_j)]
+o(\|\bC\|_F),
\]
where $[\bC_{ij}\cdot (y_i+y_j)]$ denotes a matrix of $D\times D$, with $ij$-th
entry given by $\bC_{ij}\cdot (y_i+y_j)$.

Then the lemma is proved by applying this observation to $\bY=\bX^{0.5}$ and
$(\bY+\bC)^2=\bX+\bE$:
\begin{align*}
\bE=&\big[[(\bX+\bE)^{0.5}-\bX^{0.5}]_{ij}\cdot
(x_i^{0.5}+x_j^{0.5})\big]+o(\|(\bX+\bE)^{0.5}-\bX^{0.5}\|_F).%\cdot %\\
%=&\big[[(\bX+\bE)^{0.5}-\bX^{0.5}]_{ij}\cdot
%(x_i^{0.5}+x_j^{0.5})\big]\\&+o\Big(\Big\|\big[[(\bX+\bE)^{0.5}-\bX^{0.5}]_{ij}
%\cdot (x_i^{0.5}+x_j^{0.5})\big]\Big\|_F\Big).
\end{align*}
\end{proof}
\end{document}